\pgfplotsset{compat=1.14}
\DeclareRobustCommand{\AA}[1]{%
    \IfEqCase{#1}{%
        {1}{\textsf{AA\textsuperscript{\textbf{--}}}}%
        {2}{\textsf{AA\textsuperscript{\textbf{+}}}}%
    }[\PackageError{AA}{Undefined option to AA: #1}{}]%
}%
\DeclareRobustCommand{\UN}{\textsf{UN}\xspace}%
\newcommand{\pie}{\pi^\mathfrak{e}}
\newtheorem{theorem}{Theorem}
\newtheorem*{theorem*}{Theorem}
\newtheorem{lemma}{Lemma}
\newtheorem{assumption}{Assumption}
\newtheorem{proposition}{Proposition}
\newtheorem*{proposition*}{Proposition}
\theoremstyle{definition}
\newtheorem{definition}{Definition}
\newtheorem*{remark*}{Remark}
\date{}
\title{\textbf{From Fair Decision Making To Social Equality}}
\author{Hussein Mouzannar \thanks{American University of Beirut. Email: \texttt{hssein.mzannar@gmail.com}} \and Mesrob I. Ohannessian \thanks{Toyota Technological Institute at Chicago. Email: \texttt{mesrob@ttic.edu}} \and Nathan Srebro \thanks{Toyota Technological Institute at Chicago. Email: \texttt{nati@ttic.edu}}}
\begin{document}
\maketitle
\begin{abstract}
The study of fairness in intelligent decision systems has mostly ignored long-term influence on the underlying population. Yet fairness considerations (e.g. affirmative action) have often the implicit goal of achieving balance among groups within the population. The most basic notion of balance is eventual equality between the qualifications of the groups. How can we incorporate influence dynamics in decision making? How well do dynamics-oblivious fairness policies fare in terms of reaching equality? In this paper, we propose a simple yet revealing model that encompasses (1) a selection process where an institution chooses from multiple groups according to their qualifications so as to maximize an institutional utility and (2) dynamics that govern the evolution of the groups' qualifications according to the imposed policies. We focus on demographic parity as the formalism of affirmative action.

We then give conditions under which an unconstrained policy reaches equality on its own. In this case, surprisingly, imposing demographic parity may break equality. When it doesn't, one would expect the additional constraint to reduce utility, however, we show that utility may in fact increase. In more realistic scenarios, unconstrained policies do not lead to equality. In such cases, we show that although imposing demographic parity may remedy it, there is a danger that groups settle at a worse set of qualifications. As a silver lining, we also identify when the constraint not only leads to equality, but also improves all groups. This gives quantifiable insight into both sides of the mismatch hypothesis. These cases and trade-offs are instrumental in determining when and how imposing demographic parity can be beneficial in selection processes, both for the institution and for society on the long run.
\end{abstract}

\section{Introduction}

As many aspects of human society become increasingly automated, questions of ethical nature migrate into the technological sphere. Admittedly, no amount of formalization can capture all the complexity and subtlety of these issues. Yet, one is compelled to mathematize them and integrate them with familiar frameworks. The result of this process is the design of purportedly ethically-minded technology. An important component of this process, then, is to understand just how well such technology interacts with society and whether its use does indeed result in the intended effect.

Automated decision and policy making, particularly the data-driven case embodied by machine learning, is one such technology that has recently allotted considerable attention to these ethical questions. In particular, guaranteeing non-discriminatory behavior has been at the forefront of research. Various formalizations of non-discrimination have been proposed, relationships between these notions have been studied \cite{dwork2012fairness,hardt2016equality}, and fundamental trade-offs have been identified \cite{kleinberg2016inherent}. Yet the influence of the resulting decision systems on society has only garnered modest treatment \cite{pmlr-v80-liu18c}. One place where this question has been thoroughly considered is in studying the evolution of negative stereotypes in labor markets, in economics \cite{coate1993will, hu2018short}. The dynamics there, however, are very carefully crafted to mimic the labor market. To relate the formalism of the current paper with these prior work, see Section \ref{sec:related}.

This paper starts by proposing a selection process where an institution chooses from multiple groups according to their qualifications so as to maximize an institutional utility.
In this context, non-discrimination often takes the form of constraining the type of selection policies. Demographic parity, for example, enforces groups to have equal selection rates. This is a simple yet rich model that has been well studied \cite{hardt2016equality} \cite{kleinberg2016inherent}, and as such this paper chooses it as the archetype of non-discrimination, referring to it with the more colloquial name of affirmative action (AA). But non-discrimination is rarely a goal in and of itself. Ultimately, one tacitly expects a benefit to society. What kind of expected benefit is implicit in AA? The fact that a well-integrated society does not require non-discrimination hints at a possible answer. This is in particular true if the groups themselves are indistinguishable. If one accepts that such social equality is the intended effect of non-discrimination, then the question becomes: does AA lead to it? To answer this, modeling how the populations change in response to policies is not simply a luxury, but a necessity. This paper starts with some natural tenets, and derives from them a model of dynamics that govern the evolution of the groups' qualifications according to the imposed policies. These dynamics are general enough to model behavior beyond the specific ones in prior work.

The paper then proceeds by first giving conditions under which an unconstrained policy reaches equality on its own. In this case, surprisingly, imposing AA may break equality. When it doesn't, one would expect the additional constraint to reduce utility, however, it can be shown that utility may in fact increase. In practice, AA acts in two possible ways. Either selection rates of the privileged are reduced or those of the underprivileged are increased \cite{fessenden_keller_2013}. This dichotomy is at the heart of the starkly different ways in which non-discrimination manifests itself. In real world scenarios, unconstrained policies do not lead to equality. In such cases, it can be shown that although imposing AA may remedy it, there is a danger that groups settle at a worse set of qualifications. As a silver lining, one can identify exactly when the constraint not only leads to equality, but also improves all groups.

To summarize, the contributions of this paper are as follows:
\begin{itemize}
	\item A simple yet flexible model for both a selection process by an institution and dynamics describing the change in the population due to selection policies
	\item A characterization of the policies resulting from institutional utility maximization without and with imposing affirmative action as non-discrimination 
	\item Conditions under which society equalizes for each type of policy, along with the impact on the long-term institutional utility
\end{itemize}
These cases and trade-offs are instrumental in determining when and how imposing demographic parity can be beneficial in selection processes, both for the institution and for society on the long run.

The rest of the paper is organized as follows. Section \ref{sec:related} covers the most relevant related work. Section \ref{sec:problem} formulates the problem and establishes the models. Sections \ref{sec:unequalityaa} and \ref{sec:aagood} study the contrast between imposing affirmative action or not, in the case when not imposing it naturally leads to equality or not, respectively. Section \ref{sec:conclusion} concludes.

\section{Related Work} \label{sec:related}

On the economics side, a large literature exists examining statistical discrimination, dating back to Arrow \cite{arrow1973theory}. Prior work has attempted to study the effect of affirmative action in the labor market to remove stereotypes \cite{coate1993will, hu2018short,moro2004general,austen2006redistribution,foster1992economic}, where stereotypes are defined as misrepresenting the quality of a worker from a disadvantaged group, when groups are assumed to be ex-ante equal. The models of \cite{coate1993will} and \cite{hu2018short} are the most related to the present work: they both set up a game in a market between workers and employers, where workers have the choice to invest in themselves to become qualified, and emit a test signal that employers threshold as their policy. The dynamics that this game generates on the distribution of qualified individuals per group is very specific. In particular, it is restricted to be unimodal. Both papers look at sufficient conditions for equality and propose interventions to actively reach it. The current paper trades off the concreteness of this line of work for more domain generality, through less specific dynamics. This helps garner more insight about the fundamental interplay between the constraints and changes in society. Moreover, examining dynamics and tracking the populations through time instead of only looking at equilibria as in \cite{moro2004general,foster1992economic}, allows comparing cumulative utility which plays a role in short term considerations. 

A recent work \cite{pmlr-v80-liu18c} raised very similar questions to the ones considered here. The main difference in that setup is that individuals are characterized by a potentially non-binary score and are studied over only a single step of the selection process. The influence of the selection is modeled by a social utility function $\Delta$ that can be interpreted as the expected change in score for an individual due to selection. The authors do not suggest that this change is due to population shifts, though it could be. Without tailoring $\Delta$ specifically to have such an interpretation, it is not possible to track distributions of scores over long time horizons. The dynamics proposed here implements precisely that aspect and enables the study of both institutional utility and social equality that emerge from fairness constraints. Another somewhat related work is that of \cite{hashimoto2018repeated}, which models the retention rate of users of a particular service as a function of that service's quality. There, this is thought of as the error of a learning algorithm and the authors propose a robust learning approach to avoid amplifying the difference in retention rate by group.

The present formulation also bears some similarity to reinforcement learning due to the dynamics being essentially Markovian. As such, it can be thought of as an optimal control problem. The goal here however is to analyze policies that are unaware of the dynamics, and do not make an explicit effort to learn it. Also, the focus is not simply on utility, but the secondary objective of social equality too. Previous work \cite{jabbari2017fairness} has explored fairness with learning, guaranteeing that actions are preferentially selected only if they effectively generate more reward. However, there are many key differences, such as the state and action space in the current model being continuous as opposed to the discrete versions treated in \cite{jabbari2017fairness} and having deterministic transitions. Work in online decision making \cite{joseph2016fairness} has also explored fairness constraints, but in a bandit setting that lacks the dynamical aspect.

\section{Model} \label{sec:problem}

\subsection{Society: groups, qualification profiles}
Throughout this paper, the total population is divided into two groups indexed by the variable $G$: group $G=A$ which constitutes the $g_A$ fraction and group $G=B$ the $g_B= 1- g_A$ fraction. The limitation to only two groups simplifies the exposition yet maintains the essence of the problem.

One can conceptually think of each individual in either group as having some attribute $\theta \in \Theta$ that bears information about qualification. For example in a college admission scenario the attribute $\theta$ can be: $[\text{GPA, SAT, Letters of recommendation}]$. This attribute is thought to provide a complete description in this context. In what follows, $\theta$ is implicitly mapped through an estimator $F: \theta \to \lbrace0,1\rbrace$ to a crisp evaluation of qualification: say $v=1$ if qualified and $v=0$ otherwise. The assumption is that this binary classification is sufficient to characterize the utility that the individual would bring. This is compatible with the work on statistical discrimination \cite{coate1993will,hu2018short}. An alternative way to reach this model is to first consider a more general multiple class characterization and then show that a threshold policy is optimal with respect to utility and societal improvement, as was done in \cite{pmlr-v80-liu18c}. This again indicates that a binary characterization, i.e. above and below the threshold, is often sufficient. For the example of admissions, one can think of $v=1$ as indicating an individual likely to be successful in college. 

We denote the distribution of evaluations $V=v$ in group $G=j$ by $\pi(\cdot|G=j)$. This probability distribution is the \emph{qualification profile} of group $G=j$. Instead of tracking individuals, it is more natural to study changes in the population in terms of the qualification profiles of groups. For example, in college admissions it is not the same students that would change their qualifications in response to admission policies, but rather those of the next application cycle.

\subsection{Institution: policy, utility, selection rates}
The population undergoes a selection process as illustrated in figure \ref{fig:pipeline}. An \emph{institution}, referred to also by \emph{policy maker}, designs a \emph{policy} $\tau(V=v;G=j)$: $\{0,1\} \times \{A,B\} \rightarrow [0,1]$ that maps each individual to a probability of selection, possibly depending on the group identity.
The institution  places a \emph{utility} on each possible evaluation $v$, through a map $u:\{0,1\}\to\mathbb{R}$, $v\mapsto u(v)$. Since $v=1$ and $v=0$ are assumed to be beneficial and detrimental respectively, and to avoid trivial policies where none/all are selected in what follows, assume $u(0)\leq 0 \leq u(1)$.

 The (average) \emph{institutional utility} of a policy can then be defined as follows:
\begin{equation} \label{eq:utility}
U(\tau) = \sum_{j \in \lbrace A,B \rbrace} g_j \sum_{v\in \lbrace 0,1 \rbrace } u(v) \cdot \tau(V=v;G=j) \cdot \pi(v|G=j).
\end{equation}

Any given policy also defines (per group) \emph{selection rates}: $\beta(G=j) = \sum_{v\in \lbrace 0,1 \rbrace }  \tau(V=v;G=j) \cdot \pi(V=v|G=j)$. For college admission, this corresponds to acceptance rates per group. Note that each term in the sum is the rate corresponding to a particular evaluation. Denote these  by $\beta(V=v;G=j) = \tau(V=v;G=j) \cdot \pi(V=v|G=j)$. In what follows, for notational convenience, the explicit '$V=$' and `$G=$' are dropped from the expressions of the qualification profiles, policies, and selection rates, since the context of the argument is clear.

\begin{figure}    
\centering
  \includegraphics[trim={0 1in 0 1in},clip,scale=0.5]{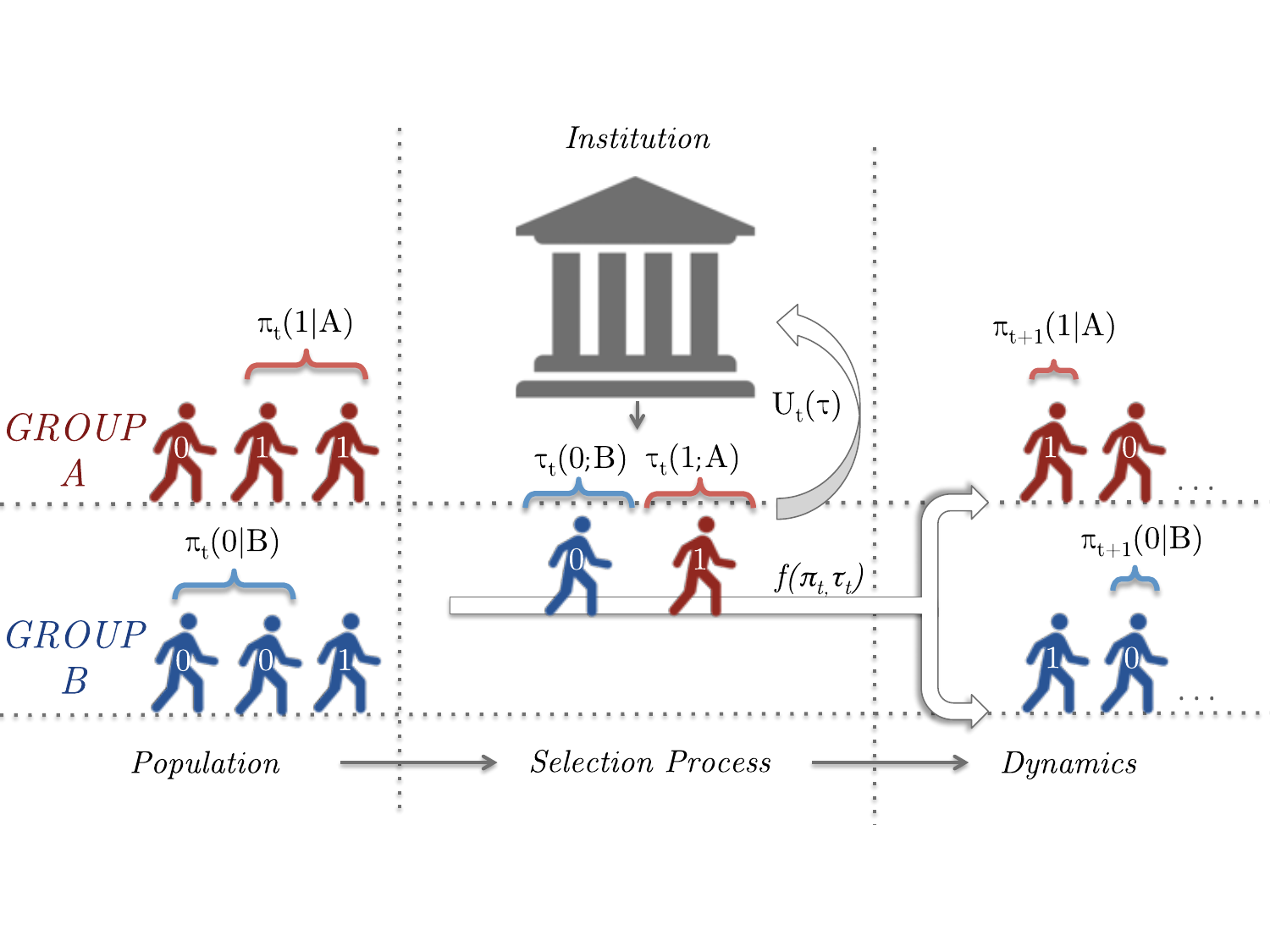}
    \caption{Selection process and influence dynamics -- Two subgroups of the population have each publicly known qualification profiles $\pi$. An institution performs a selection by maximizing utility $U$, possibly subject to a non-discrimination constraint. The selection policies $\tau$ influence the profiles at the next round, through dynamics $f$.}
    \label{fig:pipeline}
\end{figure}

\subsection{Influence Dynamics}
The execution of a selection policy can be thought of as demarcating time $t$. The main premise of this paper is that the selection process affects the population, namely by changing the qualification profiles of either group at time $t+1$. This change can be interpreted in different ways. It could be due to genuine change within society: the institutions' policies can then be thought of as having a secondary effect as social incentive or deterrent. But change could also be due to self-selection in the pool of individuals available to the institution on the next time step: policies then have the effect of filtering through time, possibly without society itself changing. Although the model is not specific to an interpretation, when the perspective moves to social equality, the tacit assumption is that dynamics are due to genuine change.

Dynamics are in general very complicated. They may vary in time, be stochastic, and depend on many parameters of the problem. Some of these aspects, however, are more relevant than others for capturing the fundamental effects. In what follows, dynamics are taken to be time-invariant and deterministic. Time-invariance is a convenient assumption, which can be supported by the slow change in the way in which society responds to stimuli. Determinism does not mean that there is no stochasticity within society, but that any such change is summarized through the evolution of the qualifications.
As such it is a Markovian assumption, with the state consisting of the qualification profiles.

Finally, what parameters of the problem should govern change and in what manner should they do so? There are three tenets that lead to this paper's proposal:
\begin{itemize}
\item[(a)] Within each group, there is an (a priori) distribution of \emph{potential} qualifications, identical to the qualification profiles at the previous time step.
\item[(b)] Upon observing the previous time step's policy, individuals respond by instantiating with some probability as (a posteriori) qualified or unqualified.
\item[(c)] The response probabilities, which summarize how policies influence individuals, depend only on the selection rates within the group of the individual and their potential qualification.
\end{itemize}

Tenet (a) describes society's inertia in the absence of influencing policies. Tenet (b) could be thought of as a result of adequate or lacking individual effort in response to policies. In tenet (c), the fact that potentially qualified and potentially unqualified individuals may be influenced differently is necessary for the history of qualifications to have relevance. The main restriction thus is that the only influence of policies is through selection rates within a group. Lack of influence across groups is an idealization of such influence arguably being weaker than within a group. As for selection rates being the key variable rather than the selection policy itself, one can motivate it by considering extremes: indeed, if no qualified (or unqualified) individuals exist then selection policy for qualified (or unqualified) individuals is never observed and cannot have influence.

With these tenets, dynamics induced by the influence of policies can be defined as follows.
\begin{assumption}[Dynamics]
	For a given group $j$ , let $\pi_t(1|j)=:\pi_t(1)$ (for clarity, the group index is dropped here and later) denote the qualification profile of group $j$ for $v=1$ at time $t$ and let the policies $\tau_t$ at that time step induce the selection rates $\beta_t$. Then the qualification profiles at time $t+1$ are given by 
      \begin{equation}\label{eq:dynamic}
	      \pi_{t+1}(1) = \pi_t(1)~f_1\Big( \beta_t(0),\beta_t(1)\Big) \ +\  \pi_t(0)~f_0\Big(\beta_t(0),\beta_t(1)\Big)
      \end{equation}
    where $f_0$ and $f_1$ are two arbitrary continuously differentiable functions from $[0,1] \times [0,1] \rightarrow [0,1]$. The pair $(f_0, f_1)$ is referred to as the \emph{dynamics}.
\end{assumption}

These dynamics make the tenets concrete. For each group $G=j$, $\pi_t(\cdot|j)$ describes the potential qualification profile. The function $f_1$ represents the \emph{retention at the top}: the rate of retention of the sub-population with potential $v=1$ due to the current policy. The function $f_0$ represents \emph{change for the better}: the fraction of the potential $v=0$ sub-population that progresses to have evaluation $1$. Equivalently, one can pair these with their counterparts to describe full conditional distributions. The pair $1-f_1$ (\emph{change for the worse}) and $f_1$ are the conditional distribution describing the response of an individual with potential $v=1$ into actual evaluations $0$ or $1$ respectively, while the pair $1-f_0$ (\emph{retention at the bottom}) and $f_0$ are the same respectively for an individual with potential $v=0$.

It is worth making a couple of remarks about these dynamics. First, the two groups' responses are identical since the choice of $f_1$ and $f_0$ does not depend on the group.
To justify this, note that in fairness considerations such as demographic parity one makes the inherent assumption that the groups are ex-ante equal in all respect except for their qualification profiles.
Second, the groups do not interact through the functional form of the dynamics. Differences between groups and any potential coupling between groups can only happen through the different and interacting selection rates induced by the policies.

With the model of the selection process and dynamics in hand, one can now start to formulate the main question: when and how can  policies lead to social equality? Perhaps the simplest notion of equality within the population can be defined as the groups becoming indistinguishable in their qualification profiles. This is formalized as follows.
\begin{definition}[Social Equality]
	Under given dynamics $(f_0, f_1)$, a policy is said to be \emph{equalizing} if for all starting $\pi_0(1|A)$ and $\pi_0(1|B)$
    \[
    	\lim_{t\to \infty} \ \left|\pi_t(1|A)-\pi_t(1|B)\right|= 0
    \]
    When this happens, say that the policy/population \emph{reaches equality}, which is understood to be in an asymptotic sense.
\end{definition}

Our goal in the following sections is to characterize the behavior of society in terms of utility and social equality under different policies which we will next describe.
\subsection{Utility Maximization and Non-discrimination}
The primary tendency of the institution is to maximize its average utility. Rarely, if ever, are institutions aware of the underlying dynamics. Thus \emph{dynamics-oblivious} policies are the appropriate ones, where the utility is maximized over a single time step. Next, two approaches taken in practice are modeled: unconstrained maximization without any non-discrimination considerations and affirmative action via demographic parity.

Left unconstrained, institutional utility maximization amounts to solving the following linear program (LP) at every time step $t$:
\begin{equation}\label{eq:1-step-un}
\max_{\tau} \ U_t(\tau),
\end{equation}
where $U_t$ is as in \eqref{eq:utility}, with all quantities defined at time step $t$. 

It is  straightforward to see that the optimal policies are $\tau_t(1;\cdot)=1$ and $\tau_t(0;\cdot)=0$ for all time $t$; this is the unconstrained (\UN) policy.

Note that the policy only depends on the distribution at time $t$. Any dynamics for group $j$ is thus simply of the form:
\begin{equation*}
\pi_{t+1}(1|j) = f(\pi_{t}(1|j)),
\end{equation*}
where $f$ is defined in terms of $(f_0,f_1)$ according to Equation \eqref{eq:dynamic} as:
\begin{equation}
    	f(\pi) := \pi f_1(0,\pi)+(1-\pi) f_0(0,\pi).
\label{eq:fundynamic}
\end{equation}

If the evaluation $v$ is perfect, then the unconstrained policy is trivially \emph{identity blind}, thus one could argue that it is fair in the sense of equalized opportunity \cite{hardt2016equality}. However, as distributions of qualified individuals differ per group, the policy will have different selection rates per group. \emph{Affirmative action} (AA), at an intuitive level, attempts to bridge such inequalities between different groups. More precisely, AA forces equal aggregate selection rates between groups \cite{coate1993will,hu2018short}, and it is more formally known as demographic parity \cite{barocas-hardt-narayanan,pmlr-v80-liu18c}. This paper adheres to this as the archetype of non-discrimination.
\begin{definition}[Affirmative Action]
	The \emph{affirmative action} constraint forces the policy to select at an equal rate between both groups:  
	\begin{equation} \label{eq:dem-parity-constraint}
		\beta(A) = \beta(B)~.
	\end{equation}
\end{definition}

Thus in this case, the policy maker solves \eqref{eq:1-step-un} with the additional constraint \eqref{eq:dem-parity-constraint}. Since the constraint is linear in $\tau$, the optimal policy is also found via a LP;
the following characterizes the solution of the LP. The proof can be found in Appendix \ref{apx:proofs}.

\begin{proposition}\label{prop:aalp}
Assume that at time $t$ group $j$ is \emph{advantaged}, defined as $\pi_t(1|j)\geq \pi_t(1|\neg j)$. Then the optimal policy is one of two cases, depending only on $g_j$, $u(0)$, and $u(1)$:

	\begin{itemize}
		\item[\AA1]  \textrm{If} \ $g_j u(1) + (1-g_j) u(0)\leq 0$, then  
        \\$\tau_t(1;j)= \frac{\pi_t(1|\neg j)}{\pi_t(1|j)}$, $\tau_t(0;j)=0$ (under-acceptance),
        \\$\tau_t(1;\neg j)=1$, $\qquad\ \tau_t(0;\neg j)=0$~. 
		\item[\AA2]  \textrm{If} \ $g_j u(1) + (1-g_j) u(0)\geq 0$, then
        \\$\tau_t(1;j)= 1$, $\ \ \tau_t(0;j)=0,$
        \\$\tau_t(1;\neg j)=1$, $\tau_t(0;\neg j)=  \frac{\pi_t(1|j) -\pi_t(1|\neg j)}{1 - \pi_t(1|\neg j)}$ (over-acceptance).
	\end{itemize}
\end{proposition}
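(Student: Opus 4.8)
The plan is to set up the one-step utility maximization as an explicit linear program and then solve it by reasoning about the sign of the coefficients, exploiting the structure that the affirmative action constraint is a single linear equality. First I would write the decision variables as the four selection rates $\beta_t(1;j), \beta_t(0;j), \beta_t(1;\neg j), \beta_t(0;\neg j)$ (equivalently the four policy values $\tau$), noting the box constraints $0 \le \tau \le 1$, i.e. $0 \le \beta_t(v;k) \le \pi_t(v|k)$. The objective \eqref{eq:utility} is $g_j[u(1)\beta_t(1;j)+u(0)\beta_t(0;j)] + (1-g_j)[u(1)\beta_t(1;\neg j)+u(0)\beta_t(0;\neg j)]$, and the AA constraint \eqref{eq:dem-parity-constraint} reads $\beta_t(1;j)+\beta_t(0;j) = \beta_t(1;\neg j)+\beta_t(0;\neg j)$. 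Since $u(0)\le 0 \le u(1)$, any feasible solution can only be improved by setting $\tau_t(0;\cdot)$ as small as the constraint allows, so at least one of the two $\beta_t(0;\cdot)$ will be zero; similarly we want $\tau_t(1;\cdot)$ as large as possible. The advantaged-group assumption $\pi_t(1|j)\ge \pi_t(1|\neg j)$ means that if we accept all qualified individuals in both groups the constraint is violated in the direction $\beta_t(1;j) > \beta_t(1;\neg j)$, so exactly one of two corrections must be made: lower the acceptance of qualified in group $j$ (under-acceptance), or raise the acceptance of unqualified in group $\neg j$ (over-acceptance). Mixing the two is never optimal because it is dominated; I would argue this by a simple exchange argument.

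The crux is then computing the utility of each of the two corner solutions and comparing. For the under-acceptance solution, $\tau_t(0;\cdot)=0$ everywhere, $\tau_t(1;\neg j)=1$, and $\tau_t(1;j)=\pi_t(1|\neg j)/\pi_t(1|j)$ so that $\beta_t(j)=\pi_t(1|\neg j)=\beta_t(\neg j)$; the utility is $u(1)\pi_t(1|\neg j)[g_j + (1-g_j)] = u(1)\pi_t(1|\neg j)$, but I must be careful to write it in a form that isolates the comparison. For the over-acceptance solution, $\tau_t(1;\cdot)=1$ everywhere, $\tau_t(0;j)=0$, and $\tau_t(0;\neg j)$ is chosen so that $\beta_t(\neg j) = \pi_t(1|\neg j) + (1-\pi_t(1|\neg j))\tau_t(0;\neg j) = \pi_t(1|j) = \beta_t(j)$, giving $\tau_t(0;\neg j) = (\pi_t(1|j)-\pi_t(1|\neg j))/(1-\pi_t(1|\neg j))$; its utility is $u(1)[g_j\pi_t(1|j) + (1-g_j)\pi_t(1|\neg j)] + (1-g_j)u(0)(\pi_t(1|j)-\pi_t(1|\neg j))$. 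Taking the difference between the over- and under-acceptance utilities, the terms proportional to $\pi_t(1|\neg j)$ cancel and what remains is $(\pi_t(1|j) - \pi_t(1|\neg j)) \cdot [g_j u(1) + (1-g_j) u(0)]$. Since $\pi_t(1|j)\ge \pi_t(1|\neg j)$, the sign of this difference is exactly the sign of $g_j u(1) + (1-g_j) u(0)$, which yields the dichotomy and the threshold stated in the proposition.

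The main obstacle I anticipate is the reduction step, not the arithmetic: I need to rigorously rule out all other vertices of the feasible polytope — in particular, solutions that simultaneously under-accept qualified in $j$ and over-accept unqualified in $\neg j$, or that accept some unqualified in the advantaged group, or that under-accept qualified in the disadvantaged group. The clean way to handle this is to observe that the LP's optimum is attained at a vertex, parametrize vertices by which box constraints are tight, and then use the sign conditions on $u$ together with the advantaged-group inequality to eliminate the non-candidates one family at a time (e.g., any solution with $\beta_t(0;j)>0$ can be strictly improved or matched by decreasing it and compensating, because $u(0)\le 0$). One subtlety worth flagging explicitly is the boundary case $g_j u(1) + (1-g_j)u(0)=0$, where both policies are optimal and attain the same utility — hence the overlapping non-strict inequalities in the two cases of the statement — and the degenerate case $\pi_t(1|j)=\pi_t(1|\neg j)$, where both policies coincide with the unconstrained one and no correction is needed. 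I would also remark that the resulting policy depends on $\pi_t$ only through the ratio $\pi_t(1|\neg j)/\pi_t(1|j)$ or the gap, as the proposition claims the \emph{case selection} depends only on $g_j, u(0), u(1)$ while the policy values themselves track the profiles.
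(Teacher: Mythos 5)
Your proposal is correct and takes essentially the same route as the paper's proof: both reduce the constrained LP by exchange arguments to the boundary configuration $\tau_t(0;j)=0$, $\tau_t(1;\neg j)=1$, and then use linearity in the one remaining free parameter so the optimum sits at one of the two corner policies, with the sign of $g_j u(1)+(1-g_j)u(0)$ deciding between under- and over-acceptance. The only cosmetic difference is that you compare the two corner policies directly, whereas the paper compares a generic intermediate value of $\tau_t(1;j)$ to the under-acceptance corner and then pushes it to $1$; the substance of the argument is the same.
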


There are two possible cases through which affirmative action impacts the policy, denoted by \AA1 and \AA2. These represent two drastically different approaches to fairness. \AA1 (\emph{under-acceptance}) accepts fewer qualified individuals from the advantaged group so as to equalize the selection rates for qualified individuals between both groups. On the other hand, \AA2 (\emph{over-acceptance}) accepts unqualified individuals from the disadvantaged group in order to equalize the aggregate selection rates. One could think of \AA1 as increasing the standard for the advantaged group and as such reducing total selection rates. As a peek at the intuition to develop, this could possibly reduce the motivation for individuals to become qualified. As for \AA2, one could think of it as reducing the standard for the disadvantaged group. It is not as evident whether this could potentially increase or decrease the motivation to become qualified. One one hand, it does present the possibility of leading to a more qualified overall society by increasing the total selection rates and training more unqualified people. On the other hand, selecting unqualified corresponds to the condition of the so-called \emph{mismatch hypothesis} \cite{sander2004systemic}, which claims that this could actually reduce the number of qualified individuals over time. This mismatch effect has been amply debated in discussions on public policy with arguments on both sides \cite{ho2004affirmative,arcidiacono2011does,sander2004systemic}. One perspective of the current investigation is to understand how the dynamics tip the scales in this debate.

Note that under AA, the dynamics for one group depends on the other group: in Proposition \ref{prop:aalp} there always exists one $j$ for which the policy depends on the qualification profiles of $\neg j$. The dynamics therefore have to be tracked jointly, unlike in \UN.

Write
\begin{align*}
&\pi_{t+1}(1|A) = f_A(\pi_{t}(1|A),\pi_{t}(1|B)) \\
&\pi_{t+1}(1|B) = f_B(\pi_{t}(1|A),\pi_{t}(1|B))~,
\end{align*}
where $f_A$ and $f_B$ are two components of a joint dynamics function $f:[0,1]\times[0,1]\to [0,1]\times [0,1]$. These components can be written in terms of $f_0$ and $f_1$ according to Equation \eqref{eq:dynamic} and Proposition \ref{prop:aalp}, but have a complicated expression that accounts for all four cases (who is advantaged and which AA case it is). Nevertheless, one could say that given dynamics of this form, equality will be reached if as $t\to \infty$ $\left| f^t_A(\pi_{0}(1|B),\pi_{0}(1|A))-f^t_B(\pi_{0}(1|B),\pi_{0}(1|A))\right| \to 0 $, for all $\pi_0(1|A)$ and $\pi_0(1|B)$.

\subsection{Dynamics in Continuous Time}
While discrete time (DT) describes successive selection-response steps naturally, the arbitrary nature of the dynamics can lead to sudden jumps and oscillatory behavior. These do not quite correspond to how populations evolve in the real world, where one expects change to happen gradually and slowly through time. Figure \ref{fig:bachdegreebysex} shows the evolution of the percentage of bachelor degree holders by sex in the United States of America. In 1967, 13\% of men 25 years and older held a bachelor's degree or higher while 8\% of women did \cite{ryan_bauman_2016}; this gap has decreased through time as degree attainment increased for both sexes. In 2015, 33\% of women held a college degree compared to 32\% of men, thus the proportion of degree holders equalized between both sexes in 47 years time. One of the reported reasons behind this increase has been attributed to Title IX \cite{titleix}, a civil rights law passed in 1972 aiming to eliminate discrimination in educational programs and the removal of quotas against disadvantaged groups. 

\begin{figure}
\centering
\begin{tikzpicture}[scale=1.2]
\begin{axis}[
    /pgf/number format/.cd,
        use comma,
        1000 sep={},
	axis x line=bottom,
	axis y line=left,
xmin=1967, xmax=2015,
ymin=5, ymax=40,
xtick={1967,1973,1979,1985,1991,1997,2003,2009,2015},
    ytick={
5,10,15,20,25,30,35,40},
ymajorgrids=true,
grid style=dashed
] 
\addplot[
    color=black,
    mark=square,
    ]
    coordinates {
(1967,	13)
(1973,	15.5)
(1979,	19)
(1985,	22)
(1991,	23.3)
(1997,	25)
(2003,	27.5)
(2009,	30)
(2015,	32.3 )   };
 \addlegendentry{men}  
\addplot[
    color=gray,
    mark=*,
    ]
    coordinates {
(1967,	8)
(1973,	9.5)
(1979,	12)
(1985,	15)
(1991,	17.5)
(1997,	21)
(2003,	24)
(2009,	29)
(2015,	32.6 )     };
   \addlegendentry{women}
\end{axis}
\end{tikzpicture}
\caption{Percentage of the US population 25 years and older holding a bachelor degree by sex from 1967 till 2015, adapted from \cite{ryan_bauman_2016}.}
\label{fig:bachdegreebysex}
\end{figure}
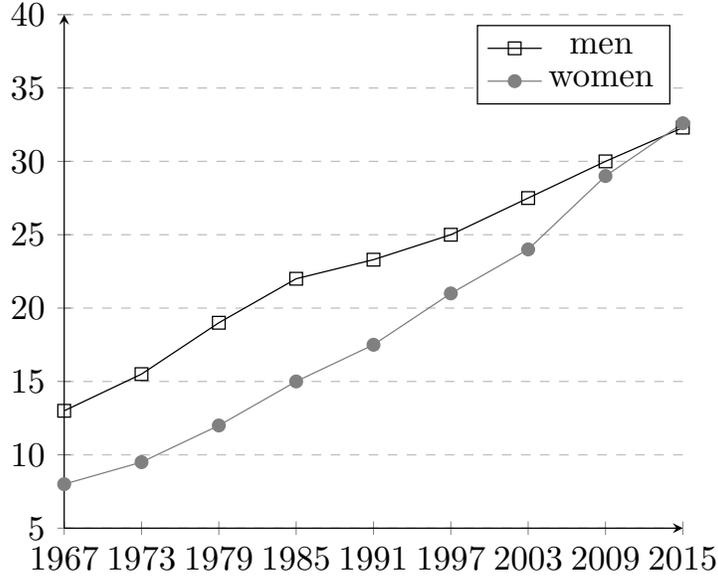

A change that is smooth and gradual can in fact be modeled by transforming the discrete time dynamics into continuous time (CT). The transformation is standard, starting from dynamics \eqref{eq:dynamic} and dropping group indices, we define the continuous time analog of any dynamic as follow.

\begin{definition}[Continuous Time Dynamics]
Given dynamics \eqref{eq:dynamic} $(f_0,f_1)$ we define the continuous time analog as:
\begin{equation}
\frac{d \pi_t}{dt} =  \pi_t  \big(f_1\left(\beta_t\left(0\right),\beta_t\left(1\right)\right)-1\big) \ + \ \left(1-\pi_t\right)  f_0\left(\beta_t\left(0\right),\beta_t\left(1\right)\right) \label{eq:CTdynamic}
\end{equation}
\end{definition}

The CT dynamics thus derived \eqref{eq:CTdynamic} can also be motivated in two additional ways. As shown through technical lemmas in appendix \ref{apx:proofs}, it maintains the properties of the DT dynamics regarding conditions leading to social equality. Moreover, it possesses additional properties that greatly facilitate the analysis.
Therefore, in what follows, the analysis of the selection process on the groups' distributions due to the different policies is performed under CT dynamics. The treatment could have initially started with CT dynamics, however the setup is more intuitive in DT and motivates the transition better.

\section{Affirmative Action under Natural Equality} \label{sec:unequalityaa}

This section is concerned with the following scenario: assume that using the \emph{unconstrained} policy equality is reached, what can happen if one had instead followed an affirmative action policy? Even though affirmative action may seem unnecessary, as without requiring further intervention equality is reached, but can there be any additional benefit to applying fairness constraints and is one guaranteed that in this setting equality will also be reached? This is relevant since the policy maker may not be aware of operating in this regime.

For the unconstrained policy to reach social equality, it is necessary that the dynamic under it's effect has a unique globally attracting equilibrium point, this implies that the iterates of the dynamics converge to a constant function on $[0,1]$. If this convergence is uniform, then there exists an iterate $T$ of $f$ such that $f^T$ is contractive, we pose a stronger assumption such that  $f$ is contractive. 
\begin{assumption}
\label{as:fcontractive}
Recall that the dynamics under \UN can be written as:
\[
	f(\pi) := \pi f_1(0,\pi)+(1-\pi) f_0(0,\pi).
\]
We assume that $f$ is $L_{\UN}$-Lipschitz with $L_{\UN}<1$, meaning that $\forall \pi,\pi' \in [0,1]$:
\begin{equation}
    |f(\pi)-f(\pi')| \leq L_{UN} | \pi - \pi'|
\end{equation}
\end{assumption}
By Banach's fixed-point theorem \cite{latif2014banach},  we have that the \UN policy reaches social equality; refer to Appendix \ref{apx:proofs} for a detailed discussion of the implications of assumption \ref{as:fcontractive}. 
 The precise questions are then: are the conditions for equality with affirmative action also met? And how does the cumulative utility obtained under AA policies compare to that under the \UN policy?

Under AA, the analysis of equality gets slightly more complicated. This is mainly due to the fact that in Proposition \ref{prop:aalp} there is a specific advantaged group. If from one time step to the next the advantage changes, the identities of the groups swap. This in itself is not an issue, unless the swap also causes a switch between the cases \AA1 and \AA2. This last situation will not be considered for two reasons: first, because it is too unwieldy, and second, because it does not arise if the dynamics are slow enough, for example by taking the point of view of continuous time.

The following characterizes the behavior of the institution adopting an \AA1 (under-acceptance) policy under the stated conditions.
\begin{theorem}\label{th:UNimpliesAA1}
If equality is reached with an unconstrained (\UN) policy by way of assumption \ref{as:fcontractive}, then it is necessarily reached by an \AA1 policy implemented over all time steps, however with no more, and possibly less, utility at each step.
\end{theorem}
The proof of the theorem and further discussion into the analysis can be found in appendix \ref{apx:proofs}.

In light of this, the policy maker has no gain in enforcing affirmative action in this case. Under acceptance will maintain the state of social equality but cause a loss of utility

Consider next how \AA2 acts on the population in this setting. The conditions for equality with the \AA2 policy cannot be directly implied from the \UN equality; there exists dynamics under which the \UN policy reaches social equality but the \AA2 policy does not.

However, under \AA2, the advantaged group's (A) distribution trajectory will reach the equilibrium point $\pie$ as the group's policy is identical to the \UN policy in CT. On the other hand, group B's distribution evolution through time is unclear. The dynamics governing it is, with $\Delta(t)=  \pi_t(1|A,\AA2) -  \pi_t(1|B,\AA2)$:
\begin{align}
\frac{d\pi_t(1|B,\AA2)}{dt} &= \pi_t(1|B,\AA2)(f_1(\Delta(t),\pi_t(1|B,\AA2) -\Delta(t)) \nonumber\\ &+ (1-\pi_t(1|B,\AA2)) (f_0(\Delta(t),\pi_t(1|B,\AA2) -\Delta(t))- \pi_t(1|B,\AA2)\nonumber
\end{align}
Since $\pi_t(1|A,\AA2)$ converges to $\pie$ as $t\to \infty$, then $\Delta(t)$ will only become a function of $\pi_t(1|B,\AA2)$ and $\pie$. Therefore, the dynamics will approximately become an ODE with one variable, and oscillatory behavior cannot exist in first order one dimensional ODEs \cite{strogatz2018nonlinear}, then $\pi_t(1|B,\AA2)$ will converge to some fixed point of the dynamics necessarily below $\pie$. Looking closely, if $\pie$ was initially below $\pi_0(1|B,\AA2)$, then the trajectory of $\pi_t(1|A,\AA2)$ will force group B to converge toward the equilibrium point. On the other hand, if $\pie\geq\pi_0(1|B,\AA2)$ then nothing can be said, thus unfortunately only worse off equilibria are guaranteed.

Now let's assume the conditions for reaching equality with the \AA2 policy are met to see what could be the consequence.
\begin{assumption}
\label{as:aa+contractive}
If the form of AA coincides with \AA2 then $f_A$ and $f_B$ for all $\pi,\pi' \in [0,1]$ can be written as:
\begin{align*}
    &f_A(\pi,\pi') = \pi f_1([\pi'-\pi]_+,\pi) + (1-\pi) f_0([\pi'-\pi]_+,\pi) \\
    &f_B(\pi,\pi') = \pi' f_1([\pi-\pi']_+,\pi') + (1-\pi') f_0([\pi-\pi']_+,\pi')
\end{align*}
Assume then that there exists $L_{\AA2}\in[0,1)$ such that for all $\pi,\pi'$ in $[0,1]$ one has:
\begin{equation*}
|f_A(\pi,\pi')-f_B(\pi,\pi')| \leq L_{\AA2} |\pi - \pi'|
\end{equation*}
\end{assumption}

Similarly to the discussion in the \AA1 section, by comparing the cumulative utility with \UN one surprisingly finds a lifeline for enforcing affirmative action.
\begin{theorem}
\label{th:aa2moreutility}
Let $\alpha = (1-g_A)u(1)/((1-g_A)u(1) +|u(0)|)$. Given assumptions \ref{as:fcontractive} and \ref{as:aa+contractive} and implementing an \AA2 policy over all time steps, then if $L_{\UN}$ and $L_{\AA2}$ satisfy the following:
\begin{equation*}
L_{\UN}\geq 1 - \alpha
\end{equation*}
and
\begin{equation*}
 L_{\AA2} \leq 1 + ( L_{\UN} -1)/\alpha,
\end{equation*}
\AA2 provides more utility in CT over an infinite time horizon.
\end{theorem}

It is straightforward to see that $L_\UN \leq 1+(L_\UN-1)/\alpha$. A weaker version of this theorem can thus be informally stated as follows: if under the unconstrained policy equality is reached slowly enough yet under \AA2 equality is reached faster, then \AA2 results in a utility gain. This implies that, rather than trading off, speed of convergence and utility go hand in hand.

\section{Affirmative Action under Disparate Equilibria} \label{sec:aagood}

\subsection{Multiple Equilibria}

It is more realistic to expect that myopically maximizing utility is unlikely to equalize the qualifications of groups. That is, one expects following an unconstrained policy to not lead to equality. To model this scenario, the dynamics should not have a unique attracting equilibrium point. In general one could have arbitrarily many, even infinitely many, fixed points for the dynamics. In this section this is simplified to assuming that the dynamics under \UN has a finite number of attracting equilibrium points, each with its own basin of attraction. Thus the \UN policy would not equalize the groups unless they are initially close enough to fall in the same basin. The notion of such dynamics can be formalized as follows.

\begin{definition}[$k$-Equilibrium Dynamics]\label{def:k-eq-dynamic}
    The function $f$ obtained from $(f_0, f_1)$ under the \UN policy is called a $k$-equilibrium (DT or CT) dynamics, if it is continuously differentiable and has $k$ fixed points $\pie_{1}\leq \cdots \leq \pie_{k}$ that are locally attracting (DT or CT) equilibrium points. Namely, there exist $k-1$ delimiting fixed points $\delta_1\in(\pie_{1},\pie_{2}), \cdots, \delta_{k-1} \in(\pie_{k-1},\pie_{k})$, such that for all $i\in[k]$ and all $\pi_0 \in (\delta_{i-1},\delta_{i})$, using the convention $\delta_0=0$, $\delta_k=1$, and semi-closed basins $[0, \delta_1)$ and $(\delta_k,1]$:
	\[
	    \pi_t \to \pie_{i} \quad \textrm{as}\quad  t \to \infty,
    \]
    where $\pi_t=f^t(\pi_0)$ (for DT) or $\pi_t$ is the solution at $t$ of $\frac{d\pi}{dt} = f(\pi)-\pi$ initialized at $\pi_0$ (for CT).
\end{definition}

\begin{remark*} Note that it follows that the $\delta_i$, $i\in[k-1]$ are unstable equilibria, under both DT and CT dynamics, by virtue of being fixed points. The extremes $\pi=0$ or $\pi=1$ could also be fixed points. If one or the other is an attracting equilibrium, then the notation implies $\pie_1=0$ or $\pie_k$=1 respectively. If one or the other or both is an unstable equilibrium, then $\delta_0$ or $\delta_k$ or both join $\{\delta_i~:~i=1,\cdots,k-1\}$ as unstable equilibria.
\end{remark*}

Figure \ref{fig:keqdynamic} illustrates a $3$-equilibrium CT dynamics under an \UN dynamics with the direction of the gradient illustrated at each point. Observe the $9$ possible joint equilibria that result by following the \UN policy, depending on where each group is initialized. See Appendix \ref{apx:figure-details} for the specific dynamics $(f_0,f_1)$ for this illustration.

\begin{figure}
\centering
  \includegraphics[scale=0.5]{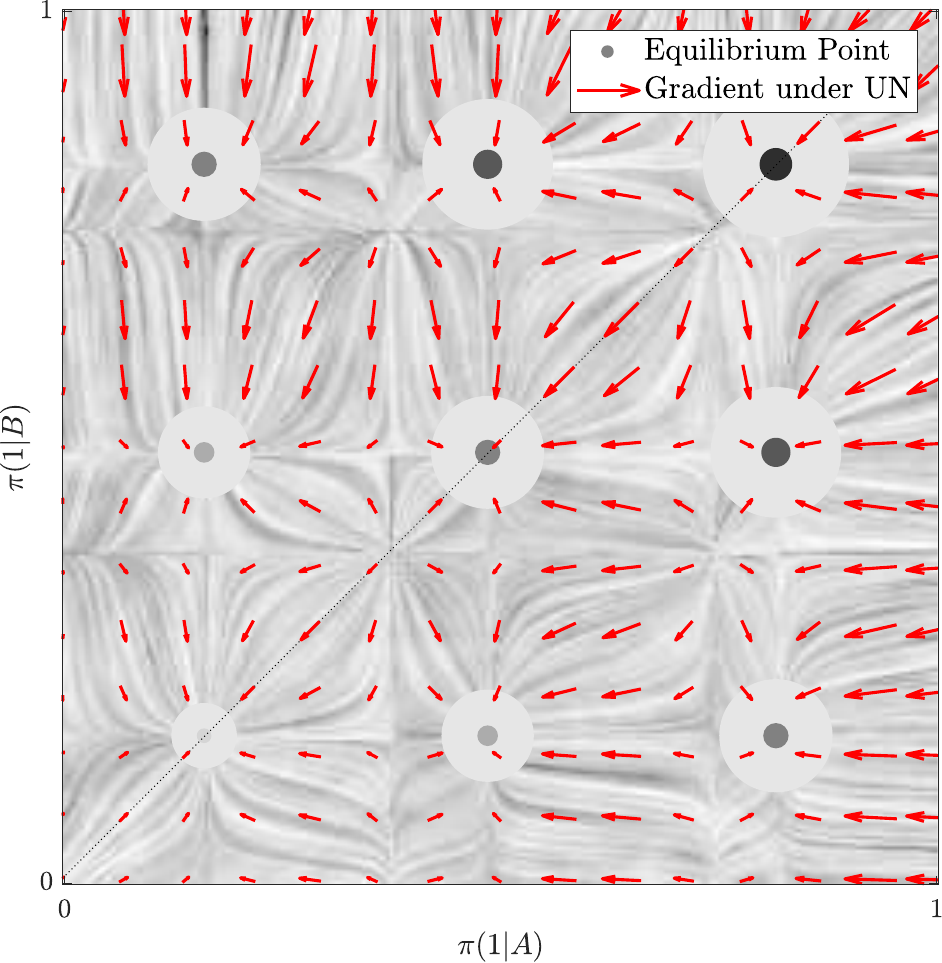}
    \caption{3-equilibrium mapping under \UN policy}
\label{fig:keqdynamic}
\end{figure}

\begin{assumption}
\label{as:keqUN}
    Assume that the function $f$ under the UN policy is a k-equilibrium dynamics and that there exists a neighborhood of radius $r^i>0$ around every equilibrium point $\pie_{i}$ such that:
    \begin{equation}\label{cond:k-eqDT2}
\left|\frac{df}{d\pi}\right|\leq L_i 
\end{equation}
for some $L_i\in[0,1)$  for all $\pi \in [ \pie_{i}-r^i,\pie_{i}+r^i]$.
\end{assumption}

Assumption \ref{as:keqUN} replaces assumption \ref{as:fcontractive} of the previous section to allow for multiple equilibria. Equation \eqref{cond:k-eqDT2} for being locally Lipschitz around each equilibrium point controls the rate of convergence in both DT and CT dynamics, just like was done for the single equilibrium case.

So far we have left $f_0$ and $f_1$ entirely arbitrary which is good for generality but may lead to esoteric behavior. It may be more reasonable to restrict them in certain natural ways. One further assumption we will make in this section is that change is harder than retention.
\begin{assumption}[Status quo bias] \label{status-quo-bias}
	For all $x,y\in[0,1]$: 
	\[
		f_1(x,y)\geq f_0(x,y).
	\]
\end{assumption}

\subsection{Eliminating Disparate Equilibria}
The following shows that similarly to the single equilibrium \UN, the \AA1 policy is equalizing even under a $k$-equilibrium. 

\begin{theorem}
\label{th:AA1equalitywith-keq}
Assume status quo bias, i.e. Assumption \ref{status-quo-bias} and a k-equilibrium dynamics under UN, i.e. Assumption \ref{as:keqUN}. Let $j$ be the initially advantaged group. If the disadvantaged group starts at $\pi_0(1|\neg j)\neq \delta_{i}$ for any $i\in \{1,\cdots,k-1\}$, following \AA1 reaches equality in both DT and CT.
\end{theorem}

The above theorem states that following \AA1 is a guaranteed way to reach equality, however equality comes at a price. Even more generally than Theorem \ref{th:AA1equalitywith-keq}, whenever \AA1 equalizes a $k$-equilibrium CT dynamics , it always leads to worse long-term utility than under \UN by leading to a population with lower qualification. On the other hand when following \AA2, equality is always beneficial. The caveat is that, as in the single equilibrium case, under the conditions of a $k$-equilibrium dynamics, one cannot deduce conditions for \AA2 that lead to equality. Assuming that these conditions do hold, however, guarantee a utility no worse than under \UN by leading to a population with higher qualification. The following Theorem characterizes this.

\begin{theorem}\label{th:keqAA1-2}
Under the conditions of Theorem \ref{th:AA1equalitywith-keq}, if the policy is \AA1, then the equalized population generates long-term utility no higher (and possibly lower) than the limiting population under \UN. If the policy is \AA2 and it leads to social equality, then the equalized population generates long-term utility no lower (and possibly higher) than the limiting population under \UN.
\end{theorem}

\begin{figure}[ht]
\centering
	\includegraphics[scale=0.6]{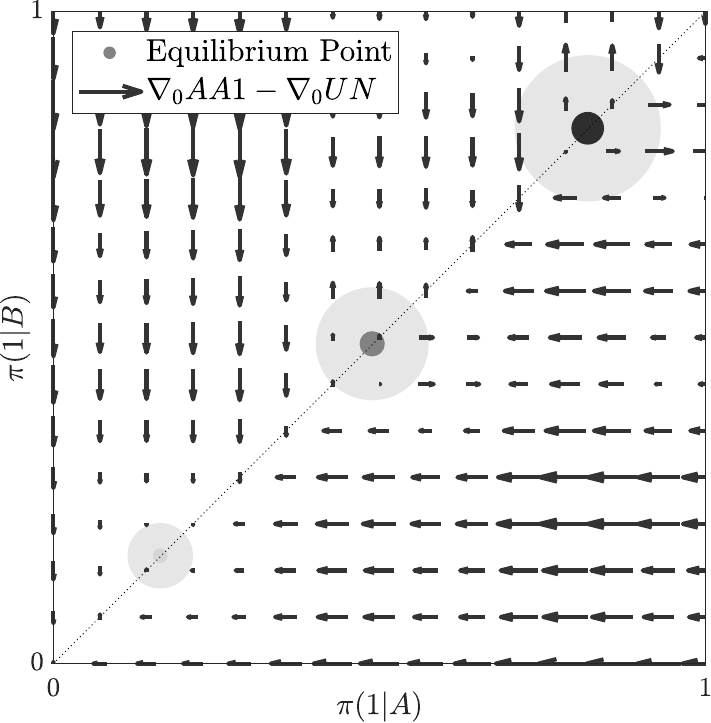}
	\includegraphics[scale=0.6]{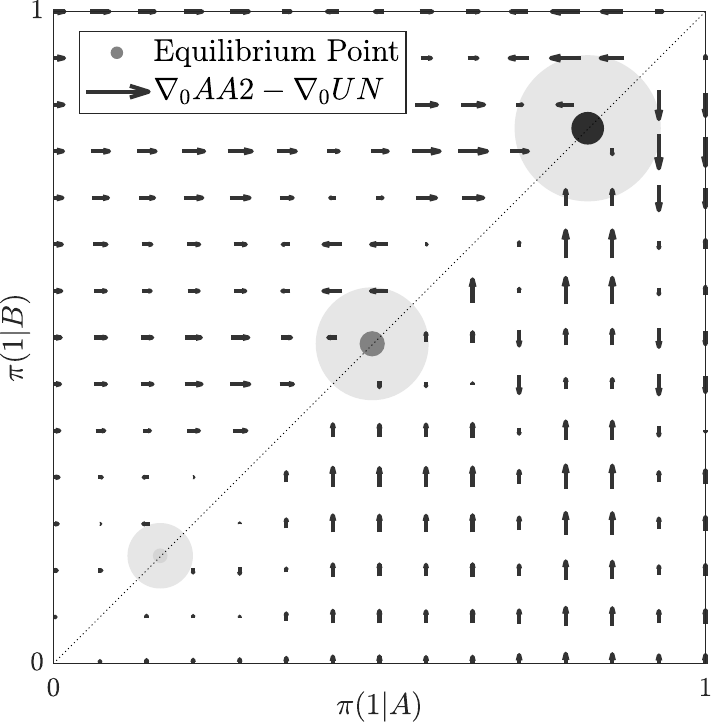}
    \caption{Illustration of Theorem \ref{th:keqAA1-2} for the \AA1 and \AA2 policies respectively under the same dynamics as that of Figure \ref{fig:keqdynamic}}
    \label{fig:aa12dyn}
\end{figure} 

Figure \ref{fig:aa12dyn} shows the difference between the gradient of $\pi_t$ under either \AA1 and \AA2 and \UN, for the dynamics which resulted in Figure \ref{fig:keqdynamic} in a $3$-equilibrium \UN. This can be shown to satisfy the conditions to make the \AA1 and \AA2 policies equalizing. Note how the gradient difference for \AA1 points generally downward and leftward toward states with lower utility  as opposed to that of \AA2 which generally points upward and rightward toward equilibrium points with higher utility. These illustrate qualitatively the kind of behavior characterized in Theorem \ref{th:keqAA1-2}.

\section{Conclusions} \label{sec:conclusion}

Imposing fairness considerations on decision making systems requires understanding the influence that they will have on the population at hand. Without knowledge of this influence, seemingly fair constraints might in fact exacerbate existing differences between groups of the population.

This paper proposed a simple but expressive model of a selection process to concretely consider these concerns. Namely, affirmative action was studied in terms of its ability to equalize the qualifications of different groups. Imposing one case of affirmative action (under-acceptance of qualified individuals) was shown to guarantee equality at the cost of worse institutional utility and possibly decreasing the population's overall qualification level. In another case (over-acceptance of unqualified individuals), however, affirmative action was shown to lead to a policy with different characteristics: equality cannot be directly guaranteed to hold but when it does, it results, in equilibria where the population becomes more qualified. Interestingly, this last case corresponds to the debate on the mismatch hypothesis. The analysis here quantifies how both sides of the debate can be correct, depending on whether society equalizes or not. This invites devoting attention to the conditions under which over-acceptance tips society toward equality.

This paper is certainly not the last word on the role of dynamics in non-discrimination. The hope is to spur many new lines of investigation. The proposed model and analysis could serve as a framework to evaluate other non-discrimination constraints. More ambitiously, it could be used to motivate and derive new constraints with explicit awareness of their long term impact. 



\bibliographystyle{alpha}
\bibliography{ref}

\newcommand{\etalchar}[1]{$^{#1}$}
\begin{thebibliography}{DHP{\etalchar{+}}12}

\bibitem[A{\etalchar{+}}73]{arrow1973theory}
Kenneth Arrow et~al.
\newblock The theory of discrimination.
\newblock {\em Discrimination in labor markets}, 3(10):3--33, 1973.

\bibitem[AAFS11]{arcidiacono2011does}
Peter Arcidiacono, Esteban~M Aucejo, Hanming Fang, and Kenneth~I Spenner.
\newblock Does affirmative action lead to mismatch? a new test and evidence.
\newblock {\em Quantitative Economics}, 2(3):303--333, 2011.

\bibitem[ASW06]{austen2006redistribution}
David Austen-Smith and Michael Wallerstein.
\newblock Redistribution and affirmative action.
\newblock {\em Journal of Public Economics}, 90(10-11):1789--1823, 2006.

\bibitem[BHN18]{barocas-hardt-narayanan}
Solon Barocas, Moritz Hardt, and Arvind Narayanan.
\newblock {\em Fairness and Machine Learning}.
\newblock fairmlbook.org, 2018.
\newblock \url{http://www.fairmlbook.org}.

\bibitem[CL93]{coate1993will}
Stephen Coate and Glenn~C Loury.
\newblock Will affirmative-action policies eliminate negative stereotypes?
\newblock {\em The American Economic Review}, pages 1220--1240, 1993.

\bibitem[DHP{\etalchar{+}}12]{dwork2012fairness}
Cynthia Dwork, Moritz Hardt, Toniann Pitassi, Omer Reingold, and Richard Zemel.
\newblock Fairness through awareness.
\newblock In {\em Proceedings of the 3rd innovations in theoretical computer
  science conference}, pages 214--226. ACM, 2012.

\bibitem[FK13]{fessenden_keller_2013}
Ford Fessenden and Josh Keller.
\newblock How minorities have fared in states with affirmative action bans.
\newblock
  \url{http://www.nytimes.com/interactive/2013/06/24/us/affirmative-action-bans.html},
  Jun 2013.

\bibitem[FV92]{foster1992economic}
Dean~P Foster and Rakesh~V Vohra.
\newblock An economic argument for affirmative action.
\newblock {\em Rationality and Society}, 4(2):176--188, 1992.

\bibitem[GCF{\etalchar{+}}08]{grass2008optimal}
Dieter Grass, Jonathan~P Caulkins, Gustav Feichtinger, Gernot Tragler, and
  Doris~A Behrens.
\newblock Optimal control of nonlinear processes.
\newblock {\em Berlino: Springer}, 2008.

\bibitem[HC18]{hu2018short}
Lily Hu and Yiling Chen.
\newblock A short-term intervention for long-term fairness in the labor market.
\newblock In {\em Proceedings of the 2018 World Wide Web Conference on World
  Wide Web}, pages 1389--1398. International World Wide Web Conferences
  Steering Committee, 2018.

\bibitem[Ho04]{ho2004affirmative}
Daniel~E Ho.
\newblock Why affirmative action does not cause black students to fail the bar.
\newblock {\em Yale LJ}, 114:1997, 2004.

\bibitem[HPS{\etalchar{+}}16]{hardt2016equality}
Moritz Hardt, Eric Price, Nati Srebro, et~al.
\newblock Equality of opportunity in supervised learning.
\newblock In {\em Advances in neural information processing systems}, pages
  3315--3323, 2016.

\bibitem[HSNL18]{hashimoto2018repeated}
T.~B. Hashimoto, M.~Srivastava, H.~Namkoong, and P.~Liang.
\newblock Fairness without demographics in repeated loss minimization.
\newblock In {\em International Conference on Machine Learning (ICML)}, 2018.

\bibitem[JJK{\etalchar{+}}17]{jabbari2017fairness}
Shahin Jabbari, Matthew Joseph, Michael Kearns, Jamie Morgenstern, and Aaron
  Roth.
\newblock Fairness in reinforcement learning.
\newblock In {\em International Conference on Machine Learning}, pages
  1617--1626, 2017.

\bibitem[JKMR16]{joseph2016fairness}
Matthew Joseph, Michael Kearns, Jamie~H Morgenstern, and Aaron Roth.
\newblock Fairness in learning: Classic and contextual bandits.
\newblock In {\em Advances in Neural Information Processing Systems}, pages
  325--333, 2016.

\bibitem[KMR16]{kleinberg2016inherent}
Jon Kleinberg, Sendhil Mullainathan, and Manish Raghavan.
\newblock Inherent trade-offs in the fair determination of risk scores.
\newblock {\em arXiv preprint arXiv:1609.05807}, 2016.

\bibitem[Lat14]{latif2014banach}
Abdul Latif.
\newblock Banach contraction principle and its generalizations.
\newblock In {\em Topics in fixed point theory}, pages 33--64. Springer, 2014.

\bibitem[LDR{\etalchar{+}}18]{pmlr-v80-liu18c}
Lydia Liu, Sarah Dean, Esther Rolf, Max Simchowitz, and Moritz Hardt.
\newblock Delayed impact of fair machine learning.
\newblock In Jennifer Dy and Andreas Krause, editors, {\em Proceedings of the
  35th International Conference on Machine Learning}, volume~80 of {\em
  Proceedings of Machine Learning Research}, pages 3156--3164,
  Stockholmsmässan, Stockholm Sweden, 10--15 Jul 2018. PMLR.

\bibitem[MN04]{moro2004general}
Andrea Moro and Peter Norman.
\newblock A general equilibrium model of statistical discrimination.
\newblock {\em Journal of economic theory}, 114(1):1--30, 2004.

\bibitem[oJ]{titleix}
United States~Department of~Justice.
\newblock Equal access to education: Forty years of title ix.
\newblock
  \url{https://www.justice.gov/sites/default/files/crt/legacy/2012/06/20/titleixreport.pdf}.

\bibitem[RB16]{ryan_bauman_2016}
Camille~L. Ryan and Kurt Bauman.
\newblock Educational attainment in the united states: 2015.
\newblock
  \url{https://www.census.gov/content/census/en/library/publications/2016/demo/p20-578.html},
  Mar 2016.

\bibitem[San04]{sander2004systemic}
Richard~H Sander.
\newblock A systemic analysis of affirmative action in american law schools.
\newblock {\em Stan. L. Rev.}, 57:367, 2004.

\bibitem[Str18]{strogatz2018nonlinear}
Steven~H Strogatz.
\newblock {\em Nonlinear dynamics and chaos: with applications to physics,
  biology, chemistry, and engineering}.
\newblock CRC Press, 2018.

\end{thebibliography}
\newpage
\appendix
\section{Deferred proofs and discussion}\label{apx:proofs}
\subsection{Section \ref{sec:problem}}
\subsubsection{Proposition \ref{prop:aalp}}
\noindent \paragraph{Proposition \ref{prop:aalp}}
\textit{Assume that at time $t$ group $j$ is \emph{advantaged}, defined as $\pi_t(1|j)\geq \pi_t(1|\neg j)$. Then the optimal policy is one of two cases, depending only on $g_j$, $u(0)$, and $u(1)$:
	\begin{itemize}
		\item[\AA1]  \textrm{If} \ $g_j u(1) + (1-g_j) u(0)\leq 0$, then  
        \\$\tau_t(1;j)= \frac{\pi_t(1|\neg j)}{\pi_t(1|j)}$, $\tau_t(0;j)=0$ (under-acceptance),
        \\$\tau_t(1;\neg j)=1$, $\qquad\ \tau_t(0;\neg j)=0$~. 
		\item[\AA2]  \textrm{If} \ $g_j u(1) + (1-g_j) u(0)\geq 0$, then
        \\$\tau_t(1;j)= 1$, $\ \ \tau_t(0;j)=0,$
        \\$\tau_t(1;\neg j)=1$, $\tau_t(0;\neg j)=  \frac{\pi_t(1|j) -\pi_t(1|\neg j)}{1 - \pi_t(1|\neg j)}$ (over-acceptance).
	\end{itemize}}

\begin{proof}
By assumption at time $t$ we have $\pi_t(1|A) \geq \pi_t(1|B)$, from this point on we drop the time subscript for clarity. Let us now express the total utility \eqref{eq:1-step-un} when the demographic parity constraint \eqref{eq:dem-parity-constraint} is satisfied:
	\begin{align*}
	U(\tau) &=g_A \cdot ( u(1) + |u(0|) \cdot ( \pi(1|A) \tau(1;A) - \pi(1|B) \tau(1;B)   )\\
     & + u(1) \cdot \pi(1|B) \tau(1;B) \ - |u(0)| \cdot \pi(0|B) \tau(0;B)
	\end{align*}
	
	First of all we claim that $\pi(1|A) \tau(1;A) \geq \pi(1|B) \tau(1;B)$ for the optimal policy. If this is not the case  then it is possible to increase $\tau(1;A)$ and decrease either $\tau(0;A)$ or $\tau(0;B)$ while still guaranteeing constraint \eqref{eq:dem-parity-constraint} as $\pi(1|A) \geq \pi(1|B)$ all the while obtaining higher utility. 
Moreover, the optimal policy will have $\tau(1;B)=1$  as increasing $\tau(1;B)$ will increase utility as  $\tau(1;A)$ can only increase and $\tau(0;\cdot)$ decrease in response. Similarly $\tau(0;A)=0$ as increasing it will only decrease utility and force $\tau(0;B)$ to increase as a consequence, to meet the demographic parity constraint.
	
	This in turn induces a  lower bound on $\tau(1;A)$, where $\frac{\pi(1|B)}{\pi(1|A)} \leq \tau(1;A)$, next we show that $\tau(1;A)$ can take only the two possible values: $\tau(1;A)=\frac{\pi(1|B)}{\pi(1|A)}$ or $\tau(1;A)=1$.
	
	Suppose $\tau(1;A)=c$, where $c$ is a value between its two possible bounds, then to satisfy the demographic parity constraint \eqref{eq:dem-parity-constraint}:
\[
\pi(1|A) \tau(1;A)  = \pi(0|B) \tau(0;B) + \pi(1|B) 
\] we need:
\[
\tau(0;B)=\frac{c \pi(1|A) -\pi(1|B)}{\pi(0|B)} 
\]	
	Let us compare the utility between the policy with $\tau(1;A)=c$ and that with $\tau(1;A)=\frac{\pi(1|B)}{\pi(1|A)}$. Since by assumption $\tau(1;A)=c$ is utility maximizing, the difference is positive:
	\begin{align*}
	0 \leq & g_A \cdot u(1) \cdot (c- \pi(1|B) \cdot \pi(1|A) ) \cdot \pi(1|A) + (1-g_A) \cdot u(0) \cdot \left(\frac{c \pi(1|A) -\pi(1|B)}{\pi(0|B)}  \right) \cdot \pi(0|B)\\
	&= g_A \cdot u(1) \cdot ( c\pi(1|A) - \pi(1|B)) - g_A \cdot u(0) \cdot (c\pi(1|A) - \pi(1|B)) + u(0) \cdot(c\pi(1|A) - \pi(1|B))\\
	&= (g_A u(1) + g_A |u(0)| +u(0)) \cdot (c\pi(1|A) - \pi(1|B))
	\end{align*} 
	We already have that $c\pi(1|A) - \pi(1|B)\geq0$ from our first claim, and thus for the overall difference of utilities to be positive, then $g_A u(1) + g_A |u(0)| +u(0)\geq0$. However increasing $c$ to $c=1$ will result in higher utility, therefore any interpolation is not optimal.
    
To summarize if $g_A u(1) + g_A |u(0)| +u(0)\geq0$, we have $\tau(1;A)=1$, otherwise  $\tau(1;A)=\frac{\pi(1|B)}{\pi(1|A)}$. As a final note, if instead  $\pi_t(1|A) \leq \pi_t(1|B)$, then it suffices to switch $A$ by $B$ in the conditions and policies. 
\end{proof}
\subsubsection{Continuous Time Dynamics }
In this section we will describe the CT transformation and state some useful lemmas regarding the dynamics in CT.

The transformation to CT is standard, starting from dynamics \eqref{eq:dynamic} and dropping group indices, one can rewrite it as a difference equation:
\begin{equation*}
\pi_{t+1}(1) - \pi_t(1) = \pi_t(1)  \big(f_1(\beta_t(0),\beta_t(1))-1\big) \ + \ \pi_t(0)  f_0(\beta_t(0),\beta_t(1))
\end{equation*}
Now assume that the difference from $t$ to $t + \Delta t$ for all $\Delta t>0$ is proportional to that from $t$ to $t+1$ (linear interpolation):
\begin{align*}
&\pi_{t+\Delta t}\left(1\right) - \pi_t\left(1\right)  =\Delta t \cdot  \big( \pi_t\left(1\right)  \left(f_1\left(\beta_t\left(0\right),\beta_t\left(1\right)\right)-1\right) \ + \ \pi_t\left(0\right)  f_0\left(\beta_t\left(0\right),\beta_t\left(1\right)\right) \big)
\end{align*}
dividing by $\Delta t$:
\begin{equation*}
\frac{\pi_{t+\Delta t}(1) - \pi_t(1)}{\Delta t} =  \pi_t(1)  (f_1(\beta_t(0),\beta_t(1))-1) \ + \ \pi_t(0)  f_0(\beta_t(0),\beta_t(1))
\end{equation*}
Now taking the limit as $\Delta t \to 0$:
\begin{equation*}
\frac{d \pi_t}{dt} =  \pi_t  \big(f_1\left(\beta_t\left(0\right),\beta_t\left(1\right)\right)-1\big) \ + \ \left(1-\pi_t\right)  f_0\left(\beta_t\left(0\right),\beta_t\left(1\right)\right)
\end{equation*}

The CT dynamics \eqref{eq:CTdynamic} thus derived can be motivated in two ways. As is shown next, it maintains the properties of the DT dynamics regarding conditions leading to social equality. Moreover, it possesses additional properties that greatly facilitate the analysis.

\begin{lemma}\label{prop:DT-CTeq}
	Assume the unconstrained policy is implemented in both  discrete time (DT) and continuous time (CT). Assume that the function $f$ defined in equation \eqref{eq:fundynamic} is contractive with constant $L \in [0,1)$, corresponding to assumption \ref{as:fcontractive}, with equilibrium point $\pie$, then the CT dynamics also converges to $\pie$ starting from any initial condition.
\end{lemma}
\begin{proof}
Under the unconstrained policy, the dynamics are an autonomous ordinary differential equation:
\begin{equation*}
\frac{d \pi}{dt} =  \pi(t) \cdot (f_1(0,\pi(t) )-1) \ + \ (1-\pi(t)) \cdot f_0(0,\pi(t) ) 
\end{equation*}

	At any time step $t$, the following will be shown. If $\pi_t<\pie$, $\frac{d \pi}{dt}$ is strictly positive. Otherwise if $\pi_t>\pie$, $\frac{d \pi}{dt}$ is strictly negative and finally if $\pi_t=\pie$ then $\frac{d \pi}{dt}=0$. This implies that the dynamics converges to $\pie$ no matter the starting point.
	
	Indeed, if $\pi_t<\pie$, then $\frac{d \pi}{dt} =  \pi_t \cdot f_1(0,\pi_t \ + \ (1-\pi_t) \cdot f_0(0,\pi_t) - \pi_t = f(\pi_t) - \pi_t$ , where $f(\pi)$ is the DT dynamics. Since the DT dynamics satisfies the following:
	\begin{equation*}
	0 \leq |f(\pie) - f(\pi_t) | \leq L |\pie -\pi_t|
	\end{equation*}
	and using the fact that $ f(\pie)= \pie$:
	\begin{equation}
	0 \leq |\pie - f(\pi_t) | \leq L (\pie -\pi_t) \label{eq:CTpf case1}
	\end{equation}
	From Equation \eqref{eq:CTpf case1} we can see that we must have $f(\pi_t)>\pi_t$ making the derivative positive.
	
	If $\pi_t=\pie$, then $\frac{d \pi}{dt} =  f(\pi_t) - \pi_t = \pie - \pie =0$, and finally if $\pi_t>\pie$ the proof is similar to the case where $\pi_t<\pie$.
\end{proof}

The following shows that populations in CT, unlike in DT, maintain their order.
\begin{lemma} \label{prop:ct_order}
	Following either an unconstrained or affirmative action policy in CT will preserve the initial  advantage. For example, if at $t=0$, $\pi_0(1|B)\leq\pi_0(1|A)$ then for all $t\geq 0$ $\pi_t(1|A)\leq\pi_t(1|B)$.
\end{lemma}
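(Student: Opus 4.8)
The plan is to track the signed gap $g(t):=\pi_t(1|A)-\pi_t(1|B)$ and show it can never change sign; since $g$ is continuous, it then suffices to rule out $g$ ever returning to $0$ from a nonzero value. The first observation is that on any time interval where $g$ keeps a fixed sign, say $g>0$ so that $A$ is advantaged, the implemented policy is a \emph{single} fixed formula: one of \UN, \AA1, or \AA2, and the choice between \AA1 and \AA2 is pinned down by the sign of $g_A u(1)+(1-g_A)u(0)$ (Proposition~\ref{prop:aalp}), a quantity independent of $t$; hence no switch between \AA1 and \AA2 can occur without a prior switch of the advantaged group (cf. Proposition~\ref{prop:one-AA-to-rule-them-all}). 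Moreover, in each of these policies the per-evaluation selection rates induced on the two groups are smooth (indeed polynomial) functions of $(\pi_A,\pi_B)$ with no singularity: the ostensibly singular coefficients $\pi_B/\pi_A$ (in \AA1) and $1/(1-\pi_B)$ (in \AA2) are multiplied away, leaving $\beta_A=\beta_B=(0,\pi_B)$ under \AA1 and $\beta_A=(0,\pi_A)$, $\beta_B=(\pi_A-\pi_B,\pi_B)$ under \AA2. So on such an interval $\mathbf{p}:=(\pi_A,\pi_B)$ is $C^1$ and $\dot g(t)$ is the difference of the two right-hand sides of \eqref{eq:CTdynamic} for that policy.

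The core estimate I would then establish is a one-sided differential inequality: there is a constant $C$, depending only on $\|f'\|_\infty$ and on Lipschitz constants $L_0,L_1$ of $f_0,f_1$ (all finite since $f_0,f_1$ are $C^1$ on the compact $[0,1]^2$, hence so is $f$), such that $|\dot g(t)|\le C\,|g(t)|$ whenever $g(t)\neq 0$. This is a short case check. Under \UN, $\dot\pi_j=f(\pi_j)-\pi_j$, so $\dot g=(f(\pi_A)-f(\pi_B))-(\pi_A-\pi_B)$, bounded by $(\|f'\|_\infty+1)|g|$. Under \AA1 the shared selection rate $(0,\pi_B)$ yields exactly $\dot g=-(1-f_1(0,\pi_B)+f_0(0,\pi_B))\,g$, with coefficient in $[0,2]$. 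Under \AA2, using $\beta_A=(0,\pi_A)$ and $\beta_B=(\pi_A-\pi_B,\pi_B)$ one gets $\dot\pi_A=f(\pi_A)-\pi_A$, and after regrouping $\dot g=(f(\pi_A)-f(\pi_B))-g+\pi_B(f_1(0,\pi_B)-f_1(\pi_A-\pi_B,\pi_B))+(1-\pi_B)(f_0(0,\pi_B)-f_0(\pi_A-\pi_B,\pi_B))$, each of whose four terms is $O(|g|)$ by the bounds just mentioned (for the last two, using Lipschitzness of $f_1,f_0$ in their first argument and $|\pi_A-\pi_B|=|g|$). Taking $C$ the largest of the three resulting constants completes the estimate.

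Finally I would close with a Gr\"onwall argument. Suppose for contradiction the advantage reverses: $g(0)>0$ but $g(u)<0$ for some $u>0$ (the case $g(0)=0$ is separate and trivial — the same inequality forces $g\equiv 0$). Let $t_0:=\inf\{t>0:g(t)=0\}$; it is positive (since $g(0)>0$ and $g$ is continuous) and finite (by the intermediate value theorem), and $g(t_0)=0$, $g>0$ on $[0,t_0)$. On $[0,t_0)$, $|\dot g|\le Cg$ gives $\dot g\ge -Cg$, hence $\tfrac{d}{dt}(e^{Ct}g(t))\ge 0$, so $e^{Ct}g(t)\ge g(0)$ on $[0,t_0)$, and by continuity $e^{Ct_0}g(t_0)\ge g(0)>0$ — contradicting $g(t_0)=0$. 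Thus $g>0$ for all $t\ge 0$: the initial advantage is preserved (relabeling $A\leftrightarrow B$ gives the form stated). One also records the routine fact that $[0,1]^2$ is forward invariant under \eqref{eq:CTdynamic} — at $\pi_j=0$, $\dot\pi_j=f_0\ge 0$, and at $\pi_j=1$, $\dot\pi_j=f_1-1\le 0$ — so the $f_i$ are always evaluated on their domain and all selection rates stay in $[0,1]$.

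The step I expect to be the main obstacle is the middle one, and within it the \AA2 computation: one must first see that, despite the piecewise definition of the policy and its apparently singular coefficients, the induced dynamics is a composition of $f_0,f_1$ with smooth maps of $(\pi_A,\pi_B)$ with no blow-up, and then extract a factor $|g|=|\pi_A-\pi_B|$ from \emph{each} difference — including the cross-term in which the first argument of $f_1,f_0$ moves from $0$ to $\pi_A-\pi_B$ — to land the bound $|\dot g|\le C|g|$. The remaining ingredients (forward invariance, continuity of $g$, the Gr\"onwall step, and the observation that the policy form is locally constant on fixed-sign intervals of $g$) are routine.
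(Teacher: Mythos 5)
Your proof is correct, but it takes a genuinely different route from the paper's. The paper's argument is purely qualitative: if the advantage were ever to reverse, continuity of the trajectories would force a crossing time $t'$ with $\pi_{t'}(1|A)=\pi_{t'}(1|B)$; at such a point the \AA1 and \AA2 policies degenerate to the unconstrained one, both groups then obey the same autonomous ODE from the same state, and so (by uniqueness of solutions, used implicitly) their trajectories coincide for all $t>t'$ — hence a strict swap can never occur. You instead prove the quantitative differential inequality $|\dot g|\le C|g|$ for the gap $g=\pi_t(1|A)-\pi_t(1|B)$ on a maximal interval of fixed advantage — after the correct observation that the induced selection rates are the nonsingular maps $(0,\pi_B)$ for both groups under \AA1 and $(0,\pi_A)$, $(\pi_A-\pi_B,\pi_B)$ under \AA2, so the regime cannot switch without a prior reversal — and then close with Gr\"onwall: the gap cannot vanish in finite time, so a fortiori it cannot change sign. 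Your computations check out (in particular $\dot g=-(1-f_1(0,\pi_B)+f_0(0,\pi_B))\,g$ under \AA1, and the \AA2 regrouping with the cross-terms controlled via Lipschitzness of $f_0,f_1$ in their first argument). What your route buys is that it makes explicit the regularity facts the paper leaves tacit (forward invariance of $[0,1]^2$, local Lipschitzness and hence well-posedness of the piecewise dynamics), it gives strict preservation of advantage with an exponential lower bound on the gap — essentially re-deriving the lower half of Lemma \ref{bound:deltaCT} — and it never needs to examine what the policy becomes exactly at a hypothetical crossing; what the paper's argument buys is brevity, needing only continuity plus coincidence-after-contact. (Incidentally, both readings agree that the stated conclusion of the lemma has its inequality typographically flipped; you proved the intended statement.)
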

\begin{proof}
	The conservation of advantage will be shown when following each of the noted policies. In all of the following, at time $t=0$: $\pi_0(1|B)\leq \pi_0(1|A)$.
\\ \\
	\textit{\UN:} we remind that the \UN policy is $\tau(1;\cdot)=1$ and $\tau(0;\cdot)=0$. As we are in CT, to have at some $t$, $\pi_t(1|G=B)'>\pi_t(1|G=A)$ at any time $t$ then from a simple continuity argument of the trajectory of $\pi$ over time, there must exist a time point $t'<t$ such that $\pi_{t'}(1|G=A)=\pi_{t'}(1|G=B)$. However, then we must have $\forall t > t'$ that $\pi(t,G=A)'=\pi(t,G=B)$ so that order is conserved.
\\ \\
	\textit{\AA1:} we remind that initially the \AA1 policy is $\tau_0(1;A)= \frac{\pi_0(1|B)}{\pi_0(1|A)}$, $\tau_0(1;B)=1 \ and \ \tau(0;\cdot)=0$. Following the same argument of the \UN case proof, for the groups to change order then there must exist a time-point $t'$ such that $\pi_{t'}(1|G=A)=\pi_{t'}(1|G=B)$, at that point the policy then becomes $\tau_{t'}(1;\cdot)=1$ and $\tau_{t'}(0;\cdot)=0$ and thus the argument is exactly as in the \UN case.
\\ \\
	\textit{\AA2:} finally the \AA2 policy is $\tau_0(1;A)= 1 ,\tau_0(0;A)=0, \tau_0(1;B)=1 \ and \  \tau_0(0;B)=  \frac{\pi_0(1|A) -\pi_0(1|B)}{\pi_0(0|B)}$. Following the same argument for \AA1, at  the time $t'$ the policy becomes: $\tau_{t'}(1;\cdot)=1$ and $\tau_{t'}(0;\cdot)=0$, therefore the argument is exactly as in the \UN case.
\end{proof}

With Lemma \ref{prop:ct_order}, reaching equality with AA is straightforward in CT if it does in DT.

\begin{lemma}\label{lema:CTAAequality}
If an AA policy in DT reaches equality by virtue of contractivity, i.e. $\exists L \in [0,1)$ such that:
\begin{align}\label{eq:AAcontractive}
|f_A(\pi,\pi')-f_B(\pi,\pi')|\leq L|\pi - \pi'| \ \ \forall \pi,\pi' \in [0,1]
\end{align}
 then in CT equality is likewise reached.
\end{lemma}

\begin{proof}
	Any solution $\pi_t$ to the CT dynamics will satisfy the following integral equation \cite{grass2008optimal}:
	\begin{equation*}
	\pi_t = \pi_0 + \int_{0}^{t} \frac{d\pi}{ds} ds
	\end{equation*} 
	Thus let us write the difference between the solutions of the two groups denoted by $\pi$ and $\pi'$ under AA and note that the CT dynamics \eqref{eq:CTdynamic} can be written as:
	\begin{align*}
	\frac{d\pi}{dt} &= f_A(\pi_t,\pi'_t) - \pi_t\\
	\frac{d\pi'}{dt} &= f_B(\pi_t,\pi'_t) - \pi'_t
	\end{align*}
	where $f_A(\pi,\pi')$ and $f_B(\pi,\pi')$ specify the DT dynamics under affirmative action.
	Now let us track the difference at any time $t\geq0$ of the distribution between both groups, denote $\Delta_t := \pi_t-\pi'_t$ and assume that $\pi_0\geq\pi_0'$:
	\begin{equation*}
	\pi_t-\pi'_t = \pi_0 + \int_{0}^{t}( f_A(\pi_s,\pi'_s) - \pi_s) ds - \pi'_0 + \int_{0}^{t} (-f_B(\pi_s,\pi'_s) + \pi'_s )ds
	\end{equation*}
	taking the derivative with respective to $t$:
	\begin{equation*}
	\Delta_t = \Delta_0 + \int_{0}^{t}( f_A(\pi_s,\pi'_s)  -f_B(\pi_s,\pi'_s)) ds -  \int_{0}^{t} \Delta_s ds
	\end{equation*}
	Our assumption that the AA DT policy reaches equality implies that there exists $L \in[0,1)$ such that inequality \eqref{eq:AAcontractive} holds, i.e. $|f_A(\pi_s,\pi'_s)  -f_B(\pi_s,\pi'_s)|\leq L \Delta_s$, and since from lemma \ref{prop:ct_order} order is preserved:
	\begin{align}
	&\Delta_t \leq \Delta_0 + \int_{0}^{t}L \Delta_s ds -  \int_{0}^{t} \Delta_s ds \nonumber\\
	&\frac{d \Delta}{dt} \leq (L-1) \Delta_t \label{eq:AACTequb}
	\end{align}
	Equation \eqref{eq:AACTequb} is obtained by differentiating with respect to $t$. Similarly since $f_A(\pi_s,\pi'_s)  -f_B(\pi_s,\pi'_s)\geq -L\Delta_s$:
	\begin{equation}\label{eq:AACTeqlb}
	\frac{d \Delta}{dt} \geq -(L+1) \Delta_t
	\end{equation}
	From our assumption we know that $L-1<0$ and $\Delta_t\geq0$ from lemma \ref{prop:ct_order} so that $\forall t\geq 0$, $\frac{d \Delta}{dt}\leq0$ . Thus $\Delta_t$ is strictly decreasing over time and is lower bounded by $0$ and when for some $T$, $\Delta_T$ becomes $0$ then this will be the case for all time $t'>T$ by combining the lower bound \eqref{eq:AACTeqlb} and upper bound \eqref{eq:AACTequb}.
\end{proof}

Thus to summarize, transitioning to CT does not break the conditions for DT but in fact facilitates them. One can additionally give a bound on the difference of the distributions between groups at each time step:
\begin{lemma}\label{bound:deltaCT}
	If the DT dynamics is contractive with constant $L \in [0,1)$ as in \eqref{eq:AAcontractive}, then under CT the difference between the distributions of both groups $\Delta_t=\pi_t(1|A)-\pi_t(1|B)$ at any time $t$ obeys:
	\begin{equation}
	\Delta_0 e^{-t(1+L)}\leq \Delta_t \leq \Delta_0 e^{-t(1-L)} \label{eq:CTbound}
	\end{equation}
\end{lemma}

\begin{proof}
	Equation \eqref{eq:AACTequb} gives us an upper bound on the derivative of $\Delta_t$ given the DT dynamics are contractive with Lipschitz constant $L \in [0,1)$:
	
	\begin{equation*}
	\frac{d \Delta}{dt} \leq (L-1) \Delta_t
	\end{equation*}
	assume that $\Delta_t\neq0$  $\forall t\geq0$:
	\begin{equation*}
	\frac{d \Delta}{dt} \frac{1}{\Delta_t } \leq (L-1)
	\end{equation*}
	integrating from $0$ to $t$:
	\begin{equation*}
	\int_{0}^{t}\frac{d \Delta}{ds} \frac{1}{\Delta_s } ds \leq \int_{0}^{t}(L-1)ds
	\end{equation*}
	we get:
	\begin{equation*}
	\ln(\Delta_t) - \ln(\Delta_0) \leq t (L-1) 
	\end{equation*}
	arranging the terms:
	\begin{equation}
	\Delta_t \leq e^{-t(1-L) + \ln(\Delta_0)} \label{eq:CTubound}
	\end{equation}
	If for some $t'\geq0$ we have $\Delta_{t'}=0$, then the conclusion of our theorem will hold for all $t<t'$ with the same proof and for all $t\geq t'$ it trivially holds. Similarly using the lower bound in \eqref{eq:AACTeqlb} we can obtain the lower bound on $\Delta_t$.
\end{proof}

\subsection{Section \ref{sec:unequalityaa}}
\subsubsection{Assumption \ref{as:fcontractive}}
In the beginning of section \ref{sec:unequalityaa}, we made an assumption on the dynamics under the \UN policy to ensure social equality. We now make rigorous why this assumption leads to the appropriate behavior.

First it is necessary that the dynamics has a unique globally attracting equilibrium point:

\begin{proposition}
\label{prop:uniquepiunderequality}
	Unconstrained dynamics, of the form $\pi_{t+1}=f(\pi_t)$ within each group, are equalizing if and only if $f$ has a unique globally attracting equilibrium point $\pie$.
\end{proposition}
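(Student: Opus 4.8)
The plan is to prove the two implications of the equivalence separately, with essentially all of the content sitting in the ``only if'' direction.

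The ``if'' direction is immediate. Suppose $f$ has a globally attracting equilibrium $\pie$. Then for any two starting values $\pi_0(1|A)$ and $\pi_0(1|B)$ we have $f^t(\pi_0(1|A))\to\pie$ and $f^t(\pi_0(1|B))\to\pie$, so by the triangle inequality
\[
	\left|\pi_t(1|A)-\pi_t(1|B)\right| \le \left|f^t(\pi_0(1|A))-\pie\right| + \left|\pie-f^t(\pi_0(1|B))\right| \to 0,
\]
which is exactly the equalizing property. This uses nothing beyond the definitions.

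For the ``only if'' direction, the first step is to observe that $f$ always has at least one equilibrium. Since $f_0,f_1$ take values in $[0,1]$, the map $f(\pi)=\pi f_1(0,\pi)+(1-\pi)f_0(0,\pi)$ is a convex combination of values in $[0,1]$, hence $f:[0,1]\to[0,1]$, and it is continuous (indeed $C^1$). Applying the intermediate value theorem to $g(\pi):=f(\pi)-\pi$, which satisfies $g(0)=f(0)\ge 0$ and $g(1)=f(1)-1\le 0$, produces a point $p\in[0,1]$ with $f(p)=p$. The key idea is then to exploit the fact that the equalizing hypothesis must hold for \emph{every} pair of initial conditions, in particular for pairs in which one group starts at this equilibrium $p$. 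Fix an arbitrary $\pi_0\in[0,1]$ and run the two-group dynamics with $\pi_0(1|A)=\pi_0$ and $\pi_0(1|B)=p$. Because $p$ is fixed, $\pi_t(1|B)=p$ for all $t$, so the equalizing hypothesis gives $|f^t(\pi_0)-p|=|\pi_t(1|A)-\pi_t(1|B)|\to 0$. Since $\pi_0$ was arbitrary, $p$ is globally attracting; and it is the unique equilibrium, since any other fixed point $q$ would be a constant orbit that must also converge to $p$, forcing $q=p$. Setting $\pie:=p$ completes the argument.

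I do not expect a genuine obstacle here; once one sees the trick of seeding one group at a fixed point, the proof is short. The only points that warrant care are (i) verifying that $f$ is a continuous self-map of $[0,1]$ so that an equilibrium exists at all (self-mapping is essential — without it the statement fails), and (ii) being precise about what ``globally attracting equilibrium'' means: the statement as phrased asks only for global attraction, i.e.\ convergence of every orbit to $\pie$, which is exactly what the argument delivers. If one additionally wanted Lyapunov stability of $\pie$, a standard compactness argument on the interval would be needed, but that is not required for the equivalence as stated.
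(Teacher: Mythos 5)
Your proof is correct, and your ``if'' direction coincides with the paper's (it is dismissed there as trivial). In the ``only if'' direction, however, you take a genuinely different and more careful route. The paper argues by contradiction with three initial points: it asserts that any pair of equalizing trajectories must converge to a common limit, and then uses determinism of $f$ to rule out two distinct limits arising from the two experiments. The assertion that the trajectories converge at all is precisely the delicate point --- synchronization $|f^t(x)-f^t(y)|\to 0$ of pairs of orbits does not by itself force any single orbit to converge --- and the paper leaves it unjustified. Your argument supplies exactly the missing ingredient: since $f(\pi)=\pi f_1(0,\pi)+(1-\pi)f_0(0,\pi)$ is a continuous self-map of $[0,1]$, the intermediate value theorem applied to $f(\pi)-\pi$ yields a fixed point $p$, and seeding one group at $p$ turns the equalizing hypothesis directly into $f^t(\pi_0)\to p$ for every $\pi_0$, which is global attraction; uniqueness is then immediate because any other fixed point is a constant orbit that must also converge to $p$. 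What the paper's three-point argument buys is brevity; what yours buys is rigor (it closes the gap about why limits exist) and it produces $\pie=p$ constructively. Your closing caveat distinguishing global attraction from Lyapunov stability is also apt, since the statement as phrased only requires attraction.
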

\begin{proof}
	(the proof is a simple logical argument) If $f$ has a unique globally attracting equilibrium point, it trivially holds that the dynamics are equalizing. Next, assume the dynamics are equalizing. We prove by contradiction that there can only be a unique globally attracting equilibrium point. Let $\pi_1,\pi_2$ and $\pi_3$ be three distinct points in $[0,1]$. Think of $\pi_1$ as the initial qualified fraction of group $A$. Then, if $\pi_2$ is the initial qualified fraction of group $B$, and if starting at $\pi_1$ and $\pi_2$ respectively the trajectories of $A$ and $B$ were to reach equality, then they must converge over time to a point $\pie_1 \in [0,1]$. Next, think of $\pi_3$ as the initial qualified fraction of group $B$. By the same argument, now the trajectories must converge over time to a (possibly distinct) point $\pie_2 \in [0,1]$, since the definition of equalization doesn't assume a unique limit. However, if $\pie_1$ and $\pie_2$ are different, it implies that the trajectory of $A$ from $\pi_1$ under $f$ can differ, which is a contradiction since $f$ is deterministic.
\end{proof}

Banach's fixed point theorem stated below guarantees that assumption \ref{as:fcontractive} leads to social equality then.

\begin{theorem*} \label{th:banachfp}
	(Banach fixed-point theorem) Let $(X, d)$ be a non-empty complete metric space and let $f : X \to X$ be a contractive mapping. Then $f$ admits a unique fixed-point $x^\mathfrak{e} \in X$ where $x^\mathfrak{e}$ can be found by choosing any arbitrary element $x_0 \in X$ and iterating the function $f$, i.e. define $x_{t}= f^t(x_0)$, then $x_t \to x^\mathfrak{e}$.
\end{theorem*}

To stress again, contractivity is not a necessary condition to reach equality, but it is a simple condition to place on the dynamics that captures the intuition of reaching equality.

More importantly, since the dynamics are defined in terms of the functions $f_0$ and $f_1$, it is more useful to give conditions on the latter to obtain a contractive $f$. For $\pi$ and $\pi'$ such that $\pi - \pi' = \Delta>0$, rewrite the requirement that $|f(\pi) -f(\pi')| \leq L \cdot \Delta$ for $L \in [0,1)$ as:
\begin{align} \label{eq:pre-uncon-lip}
|f(\pi) -f(\pi')|
	&= \left| \ \pi f_1(0,\pi) + f_0(0,\pi) - \pi f_0(0,\pi)   
		 -\ (\pi' f_1(0,\pi') + f_0(0,\pi') - \pi' f_0(0,\pi'))\ \right| \nonumber\\ 
	&= \left| \ \pi (f_1(0,\pi) - f_1(0,\pi-\Delta))
		 + (1-\pi) (f_0(0,\pi) - f_0(0,\pi-\Delta))
	\right. \nonumber \\&\left. \quad+ \Delta (f_1(0,\pi-\Delta) - f_0(0,\pi-\Delta)) \right|  < L\ 
\end{align}

A first step would be to take both $f_1$ and $f_0$ to be Lipschitz. So say
$f_1$ and $f_0$ are $\ell_1$ Lipschitz continuous with constants $L_1$ and $L_0$ respectively. That is $\forall x_1,x_2,y_1,y_2 \in [0,1]$:
\[
	|f_1(x_1,x_2) - f_1(y_1,y_2)| \leq L_1 (|x_1-y_1| + |x_2 -y_2|)
\]
and
\[
	|f_0(x_1,x_2) - f_0(y_1,y_2)| \leq L_0 (|x_1-y_1| + |x_2 -y_2|)~.
\]

This assumption is reasonable as one would expect the jump in the rate of retention ($f_1$) or rate of change ($f_0$) to be bounded as a function of the jump in the selection rates. But looking at Equation \eqref{eq:pre-uncon-lip}, it is not sufficient. How large should $L_0$ and $L_1$ be?
\begin{align}
|f(\pi) -f(\pi')|
	&\leq	|\pi (f_1(0,\pi) - f_1(0,\pi-\Delta))|
	 	+ |(1-\pi) (f_0(0,\pi) - f_0(0,\pi-\Delta))|\nonumber\\
	&\quad 		+ |\Delta (f_1(0,\pi-\Delta) - f_0(0,\pi-\Delta))| \quad (\textrm{triangle inequality}) \nonumber \\
	&<		\Delta  (  \pi L_1 + (1-\pi) L_0 + |f_1(0,\pi-\Delta) - f_0(0,\pi-\Delta)|)
\end{align}

Thus a sufficient condition for $f$ to be a contractive mapping would then be:
\begin{equation} \label{eq:uncon lip}
L_{\UN} \ := \max_{\pi \in [0,1] , \Delta<\pi}  \pi L_1 + (1-\pi) L_0 + |f_1(0,\pi-\Delta) - f_0(0,\pi-\Delta|) <1
\end{equation}

While contractivity is a restrictive condition but it has the obvious benefit of easily describing the rate at which the population reaches equality. In particular, if
\[
	\max_{j\in\{A,B\}} |\pi_0(1|j) - \pie|\leq \Delta~,
\]
then
\[
|\pi_t(1|A) - \pi_t(1|B)| \leq 2\Delta L_{\UN}^t.
\]
Equality is reached at a linear rate (in the exponent): to equalize to within an $\epsilon$ difference, about $\frac{\log \epsilon / \Delta }{\log L_{\UN}}$ time steps are sufficient.

\subsubsection{Preliminary analysis of social equality}

Before proving the main theorems of section \ref{sec:unequalityaa}, we will further discuss an important point in the analysis of affirmative action.

If the institution decides to implement AA, at every time step it solves the linear program \eqref{eq:1-step-un} with the constraint \eqref{eq:dem-parity-constraint}. The solution will take two forms: \AA1 and \AA2 as noted in Proposition \ref{prop:aalp}. The form from one time step to another might change, however this is too unwieldy and complicates the analysis, thus we need to enforce that the solution remains in a one form over all time steps.
Here is a characterization to remain in one case at all times, given with no proof as it follows readily from Proposition \ref{prop:aalp}.

\begin{proposition} \label{prop:one-AA-to-rule-them-all}
Let $\mathcal C_{j,t} := [\pi_t(1|j) \geq \pi_t(1|\neg j)]$ be the $j$-advan\-tage condition. Resolving equality either way, note that $\mathcal C_{\neg j,t} = \neg \mathcal C_{j,t}$. Let $\mathcal C_{-|j} := [g_j u(1) + (1-g_j) u(0)\leq 0]$ be the \AA1 condition under $j$-advantage and let $\mathcal C_{+|j} := [g_j u(1) + (1-g_j) u(0)\geq 0] = \neg \mathcal C_{-|j}$ be the \AA2 condition under $j$-advantage.

Then to remain in \AA1 at all times, it is necessary that for all $t$:
\[
	(\mathcal C_{A,t} \land \mathcal C_{-|A}) \lor (\mathcal C_{B,t} \land \mathcal C_{-|B}),
\]
and in particular it suffices that $\mathcal C_{-|A} \land \mathcal C_{-|B}$. Similarly, to remain in \AA2 at all times, it is necessary that for all $t$:
\[
	(\mathcal C_{A,t} \land \mathcal C_{+|A}) \lor (\mathcal C_{B,t} \land \mathcal C_{+|B}),
\]
and in particular it suffices that $\mathcal C_{+|A} \land \mathcal C_{+|B}$.
\end{proposition}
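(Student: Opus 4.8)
The plan is to reduce the whole statement to a line-by-line reading of Proposition~\ref{prop:aalp}; no analysis is involved, only logical bookkeeping. Fix a time step $t$. After breaking ties in the advantage relation by an arbitrary but fixed convention, exactly one of $\mathcal C_{A,t}$ and $\mathcal C_{B,t}$ holds, as the proposition notes; let $j_t\in\{A,B\}$ be the corresponding advantaged group, so that $j_t=A\iff\mathcal C_{A,t}$ and $j_t=B\iff\mathcal C_{B,t}$. By Proposition~\ref{prop:aalp}, the optimal AA policy at time $t$ is the \AA1 (under-acceptance) policy exactly when $g_{j_t}u(1)+(1-g_{j_t})u(0)\le 0$, i.e.\ exactly when $\mathcal C_{-|j_t}$ holds, and it is the \AA2 (over-acceptance) policy exactly when $\mathcal C_{+|j_t}$ holds. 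I would flag here the one degenerate case, $g_{j_t}u(1)+(1-g_{j_t})u(0)=0$: then both descriptions are optimal, and when in addition the advantage is a tie ($\pi_t(1|j_t)=\pi_t(1|\neg j_t)$) both reduce to the same policy. This is precisely why Proposition~\ref{prop:aalp} states both branches with non-strict inequalities, and I would simply record that the labeling is immaterial on this boundary.

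Next I would rewrite ``the policy at time $t$ is in \AA1'' as the claimed disjunction. Since exactly one of $\mathcal C_{A,t},\mathcal C_{B,t}$ is true, the proposition $\mathcal C_{-|j_t}$ is logically identical to $(\mathcal C_{A,t}\wedge\mathcal C_{-|A})\vee(\mathcal C_{B,t}\wedge\mathcal C_{-|B})$: if $\mathcal C_{A,t}$ holds the second disjunct collapses to false and the first to $\mathcal C_{-|A}=\mathcal C_{-|j_t}$, and symmetrically if $\mathcal C_{B,t}$ holds. Hence the policy at time $t$ lies in \AA1 if and only if that disjunction holds at $t$, and therefore ``remain in \AA1 at all times'' is equivalent to ``for all $t$, $(\mathcal C_{A,t}\wedge\mathcal C_{-|A})\vee(\mathcal C_{B,t}\wedge\mathcal C_{-|B})$''. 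This gives the stated necessity; it is in fact also sufficient, and the reason to phrase it as merely necessary is that the advantage events $\mathcal C_{A,t}$ involve the trajectory $\pi_t$, which itself depends on the policy, so the condition is implicit rather than checkable a priori.

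For the ``in particular it suffices'' clause I would argue directly. Suppose $\mathcal C_{-|A}\wedge\mathcal C_{-|B}$ holds (this is compatible with $u(0)\le 0\le u(1)$ — e.g.\ $|u(0)|$ large relative to $u(1)$ — so the condition is not vacuous). Fix any $t$: whichever of $\mathcal C_{A,t},\mathcal C_{B,t}$ is the true one, its matching second conjunct $\mathcal C_{-|A}$ or $\mathcal C_{-|B}$ holds by hypothesis, so $(\mathcal C_{A,t}\wedge\mathcal C_{-|A})\vee(\mathcal C_{B,t}\wedge\mathcal C_{-|B})$ is true at every $t$; by the equivalence of the previous paragraph the policy stays in \AA1 forever. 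The two \AA2 assertions follow by the identical argument with $\mathcal C_{-|\cdot}$ replaced by $\mathcal C_{+|\cdot}$ and \AA1 by \AA2, invoking the \AA2 branch of Proposition~\ref{prop:aalp}.

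The main (and essentially only) point requiring care is the boundary bookkeeping from the first paragraph — the advantage tie and the utility knife-edge $g_ju(1)+(1-g_j)u(0)=0$ — since everything else is immediate from Proposition~\ref{prop:aalp}. I would dispatch it with the observation that on these zero-measure boundaries \AA1 and \AA2 either coincide or are both optimal, so the non-strict form of all the conditions involved is exactly what keeps the disjunctions correct.
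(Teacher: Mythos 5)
Your argument is correct and is exactly the route the paper intends: the paper gives this proposition without proof, stating that it follows from Proposition~\ref{prop:aalp}, and your logical bookkeeping (case-splitting on the advantaged group, translating the \AA1/\AA2 branches of Proposition~\ref{prop:aalp} into the disjunctions, and noting that the uniform conditions $\mathcal C_{-|A}\land\mathcal C_{-|B}$ or $\mathcal C_{+|A}\land\mathcal C_{+|B}$ trivially imply them) is precisely that derivation. Your careful treatment of the tie and knife-edge boundaries is a small bonus beyond what the paper records.
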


Now recall that the dynamics under AA are written as:
\begin{align*}
&\pi_{t+1}(1|A) = f_A(\pi_{t}(1|A),\pi_{t}(1|B)) \\
&\pi_{t+1}(1|B) = f_B(\pi_{t}(1|A),\pi_{t}(1|B))~,
\end{align*}
where $f_A$ and $f_B$ are two components of a joint dynamics function $f:[0,1]\times[0,1]\to [0,1]\times [0,1]$. One says that given dynamics of this form, equality will be reached if:
\[ \text{as} \ t\to \infty, \ \left| f^t_A(\pi_{0}(1|B),\pi_{0}(1|A))-f^t_B(\pi_{0}(1|B),\pi_{0}(1|A))\right| \to 0 , \]
for all $\pi_0(1|A)$ and $\pi_0(1|B)$.

Similarly to the unconstrained case, since this would otherwise give no guarantees on the speed of convergence, one can require the joint $(f_A, f_B)$ to satisfy the following contraction condition. There exists $L \in [0,1)$ such that f satisfies:
\begin{align*}
|f_A(\pi,\pi')-f_B(\pi,\pi')|\leq L|\pi - \pi'| \ \ \forall \pi,\pi' \in [0,1]
\end{align*}
This does not quite fit the classical Banach fixed-point theorem, but it is easy to show sufficiency. That is, if the dynamics satisfy this condition then the population will reach equality. Indeed, let $\pi_0,\pi_0' \in [0,1]$:
\begin{align*}
&|f_A(\pi_0,\pi_0')-f_B(\pi_0,\pi'_0)|\leq L|\pi_0 - \pi_0'| \\
&|f_A(\pi_1,\pi_1')-f_B(\pi_1,\pi'_1)|\leq L|\pi_1 - \pi_1'| \leq L^2 |\pi_0 - \pi_0'| \\
& \hspace{4.25cm} \vdots \\
&|f_A(\pi_{t-1},\pi_{t-1}')-f_B(\pi_{t-1},\pi'_{t-1})|\leq L^t|\pi_0 - \pi_0'|
\end{align*}
Since $L<1$ and $|\pi_0 - \pi_0'|\leq 1$ then $\lim_{t \to \infty} L^t = 0$ and  $|f_A(\pi_{t-1},\pi_{t-1}')-f_B(\pi_{t-1},\pi'_{t-1})| \geq 0 \ \ \forall t$ therefore $\lim_{t \to \infty} |f_A(\pi_{t},\pi_{t}')-f_B(\pi_{t},\pi'_{t})| = 0$ and the population will reach equality.

As in the unconstrained case, this contractivity requirement on $f$ can be reduced to sufficient conditions on $f_0$ and $f_1$.

\begin{lemma} \label{lem:AA1condition}
Assume that \AA1 holds at all times, by the necessary or sufficient conditions of Proposition \ref{prop:one-AA-to-rule-them-all}. Then if 
\begin{equation}\label{eq:AA1condition}
L_{\AA1} \ := \max_{\pi \in [0,1]}  |f_1(0,\pi) - f_0(0,\pi)| < 1
\end{equation}
holds, then society equalizes.
\end{lemma}
\begin{proof}
Without loss of generality, assume that $A$ is advantaged. Let $\pi$ be shorthand for $\pi(1|A)$ and $\pi'$ for $\pi(1|B)$. Due to advantage, we can write $\pi'=\pi-\Delta$ for some $\Delta>0$:
\begin{align*}
|f_A(\pi,\pi')-f_B(\pi,\pi')| &=   | \pi f_1(0,\pi - \Delta) + f_0(0,\pi - \Delta) - \pi f_0(0,\pi -\Delta)\\& - (\pi - \Delta) f_1(0,\pi - \Delta) - f_0(0,\pi - \Delta)+ (\pi-\Delta) f_0(0,\pi-\Delta)| \nonumber\\
&= \Delta | f_1(0,\pi - \Delta) - f_0(0,\pi -\Delta)|. \nonumber 
\end{align*}
Since the latter is bounded by $L_{\AA1}$, it follows that the condition of the proposition guarantees the contraction expressed in Equation \eqref{eq:AAcontractive}.
\end{proof}

\begin{lemma} \label{lem:AA2condition}
Assume that \AA2 holds at all times, by the necessary or sufficient conditions of Proposition \ref{prop:one-AA-to-rule-them-all}, and that $f_0$ and $f_1$ are Lipschitz with constants $L_0$ and $L_1$ respectively. Then if 
\begin{equation}\label{eq:AA2condition}
L_{\AA2}:=\max_{0\leq \Delta\leq \pi \leq 1} 2[\pi L_1 +  (1-\pi) L_0] + |f_1(\Delta,\pi-\Delta) - f_0(\Delta,\pi-\Delta)|<1
\end{equation}
holds, then society equalizes.
\end{lemma}
\begin{proof}
Without loss of generality, assume that $A$ is advantaged. Let $\pi$ be shorthand for $\pi(1|A)$ and $\pi'$ for $\pi(1|B)$. Due to advantage, we can write $\pi'=\pi-\Delta$ for some $\Delta>0$:
\begin{align}
|f_A(\pi,\pi')-f_B(\pi,\pi')| 
&=  
|\pi (f_1(0,\pi) - f_1(\Delta,\pi-\Delta))\nonumber \\&+ (1-\pi) (f_0(0,\pi) - f_0(\Delta,\pi-\Delta))\nonumber + \Delta (f_1(\Delta,\pi-\Delta) - f_0(\Delta,\pi-\Delta))| \label{AA2necessarycond}  \\
& \leq\Delta \cdot (  2 \pi L_1 +  2 (1-\pi) L_0 + \nonumber |f_1(\Delta,\pi-\Delta) - f_0(\Delta,\pi-\Delta)|)\nonumber \\& \ \textrm{(triangle inequality + Lipschitz Assumption) } \nonumber
\end{align}
Since the latter is bounded by $L_{\AA2}$, it follows that the condition of the proposition guarantees the contraction expressed in Equation \eqref{eq:AAcontractive}.
\end{proof}

\subsection{Theorem \ref{th:UNimpliesAA1}}
\noindent \paragraph{Theorem \ref{th:UNimpliesAA1}}
\textit{
If equality is reached with an unconstrained (\UN) policy by way of assumption \ref{as:fcontractive}, then it is necessarily reached by an \AA1 policy implemented over all time steps, however with no more, and possibly less, utility at each step.}
\begin{proof}
	The proof of the first part of the theorem is to show that the condition in Lemma \ref{lem:AA1condition} holds. By assumption \ref{as:fcontractive} and  $f$ being continuous and differentiable, then $\forall \pi \in[0,1]$:
	\begin{equation*}
	|f'(\pi)|\leq L_{\UN}
	\end{equation*}
	where $f'$ is the derivative of the dynamics under the \UN policy. Expanding in terms of the proposed dynamics:
	\begin{equation}\label{eq:AA1cond}
	|f_1(0,\pi) -f_0(0,\pi) + \pi f_1'(0,\pi) + (1-\pi) f_0'(0,\pi)|\leq L_{\UN}
	\end{equation}
	where $f_1'$ and $f_0'$ are the partial derivatives of $f_1(0,\pi)$ and $f_0(0,\pi)$ with respect to $\pi$.

Suppose for the sake of contradiction that $|f_1(0,\pi^*) -f_0(0,\pi^*)|=1$ for some $\pi^* \in [0,1]$. Note that since both $f_1$ and $f_0$ take values in $[0,1]$, this implies that one of $f_1(0,\pi^*)$ or $f_0(0,\pi^*)$ is $0$ and the other is $1$. Suppose $\pi^*=1$. If $f_1(0,\pi^*)=1$ it follows that $f_1'(0,\pi^*)\geq0$ and thus $f'(\pi)=1 +f_1'(0,\pi^*) \geq 1$. If instead $f_1(0,\pi^*)=0$ then $f_1'(0,\pi^*)\leq0$ and thus $f'(\pi)=-1 +f_1'(0,\pi^*)\leq1$. Both cases contradict the condition that $|f'(\pi)| \leq L_{\UN} < 1$. The argument is similar for ruling out $\pi^*=0$. Finally, suppose $\pi^* \in (0,1)$. Then one of $f_1$ and $f_0$ attains its maximum at $\pi^*$ and the other its minimum, thus $f_1'(0,\pi^*)=0$ and $f_0'(0,\pi^*)=0$, implying that $|f'(\pi)|=1$. This also contradicts condition \eqref{eq:AA1cond}. It follows that $|f_1(0,\pi^*) -f_0(0,\pi^*)|\neq 1$ and thus $\AA1$ holds.

As the sufficient conditions for both \AA1 and \UN to reach equality are satisfied, in CT the equilibrium point $\pie$  will be reached by both policies by lemma \ref{prop:DT-CTeq}, however their trajectories toward $\pie$ are different and thus each achieves a different utility.

At each time-step $t$, to differentiate between the distributions due to different policies denote by $\pi_t(1|A,\UN)$ the distribution of $\pi(1|A)$ at time $t$ due to the \UN policy starting from $t=0$, and similarly for \AA1. The utility of the \UN policy at time $t$ is:
\[
U_t(\UN) = g_A \cdot u(1) \cdot \pi_t(1|A,\UN) + (1-g_A)\cdot   u(1) \cdot \pi_t(1|B,\UN)
\]
and that of \AA1 at time $t$ is:
\[
U_t(\AA1) = g_A \cdot u(1) \cdot \pi_t(1|B,\AA1) + (1-g_A)\cdot  u(1) \cdot \pi_t(1|B,\AA1)
\]
Consider the difference of the utilities:
\begin{align*}
U_t(\UN) - U_t(\AA1) &= g_A \cdot u(1) \cdot \pi_t(1|A,\UN) + (1-g_A)\cdot  \cdot u(1) \cdot \pi_t(1|B,\UN) \\
&- (g_A \cdot u(1) \cdot \pi_t(1|B,\AA1) + (1-g_A)\cdot  \cdot u(1) \cdot \pi_t(1|B,\AA1))
\end{align*}

Since under \AA1 and \UN the trajectory of $\pi(1|B)$ is identical then $\pi_t(1|B,\AA1) = \pi_t(1|B,\UN)$, thus:
\begin{equation*}
U_t(\UN) - U_t(\AA1) = g_A \cdot u(1) \cdot ( \pi_t(1|A,\UN)- \pi_t(1|B,\UN)
\end{equation*}
Following lemma \ref{prop:ct_order}, one has that $\forall t\geq 0$, $\pi_t(1|A,\UN)- \pi_t(1|B,\UN)\geq 0$, and thus the utility of the \UN policy is always no less than that of \AA1.
\end{proof}

\subsubsection{Theorem \ref{th:aa2moreutility}}
\noindent \paragraph{Theorem \ref{th:aa2moreutility}}
\textit{
Let $\alpha = (1-g_A)u(1)/((1-g_A)u(1) +|u(0)|)$. Given assumptions \ref{as:fcontractive} and \ref{as:aa+contractive} and implementing an \AA2 policy over all time steps, then if $L_{\UN}$ and $L_{\AA2}$ satisfy the following:
\begin{equation*}
L_{\UN}\geq 1 - \alpha
\end{equation*}
and
\begin{equation*}
 L_{\AA2} \leq 1 + ( L_{\UN} -1)/\alpha,
\end{equation*}
\AA2 provides more utility in CT over an infinite time horizon.
}
\begin{proof}
Let us try to look at the difference between the utility at any time step $t$ for the \UN policy and \AA2 policy:
\begin{align*}
U_t(\UN) - U_t(\AA2) &= g_A \cdot u(1) \cdot \pi_t(1|A,\UN) + (1-g_A)\cdot  u(1) \cdot \pi_t(1|B,\UN) \\
&- \big(g_A \cdot u(1) \cdot \pi_t(1|A,\AA2) + (1-g_A)\cdot  ( u(1) \cdot \pi_t(1|B,\AA2) \\&+ u(0) \cdot( \pi_t(1|A,\AA2) - \pi_t(1|B,\AA2))\ ) \big)
\end{align*}
Since under \AA2 and \UN the trajectory of $\pi(1|A)$ is identical then  $\pi_t(1|A,\AA2) = \pi_t(1|A,\UN)$, simplifying things:
\begin{align*}
&U_t(\UN) - U_t(\AA2) = (1-g_A) \cdot \big(  u(1) \cdot  (\pi_t(1|B,\UN) -\pi_t(1|B,\AA2)) - u(0) \cdot (  \pi_t(1|A,\AA2) - \pi_t(1|B,\AA2)   )   \big)
\end{align*}
using the fact that $u(0)<0$ and $u(1)>0$:
\begin{align*}
U_t(\UN) - U_t(\AA2) &= (1-g_A)(u(1) +|u(0)|)\cdot (\pi_t(1|A,\AA2)  - \pi_t(1|B,\AA2) )\\& -  (1-g_A) \cdot u(1) \cdot ( \pi_t(1|A,\UN  -\pi_t(1|B,\UN ))
\end{align*}
In  DT the cumulative utility was the sum over all time steps, similarly in CT the utility will be the integral of the utility function at each step evaluated from $0$ till the final time step.

Now let us look at the total difference in cumulative utility over an infinite time horizon:
\begin{align*}
U(\UN)-U(\AA2)&=(1-g_A)\cdot(u(1) +|u(0)|) \cdot \int_{0}^{\infty}(\pi_t(1|A,\AA2)  - \pi_t(1|B,\AA2) ) dt \\
&-  (1-g_A)\cdot u(1)  \cdot \int_{0}^{\infty}(\pi_t(1|A,\UN)  - \pi_t(1|B,\UN) ) dt 
\end{align*}
Using the two-sided bound on the difference between group distributions of Lemma \ref{bound:deltaCT}, we can lower bound the utility difference:
\begin{align*}
&\leq (1-g_A)\cdot (u(1) +|u(0)|) \cdot \int_{0}^{\infty}  \exp(-t(1-L_{\AA2}) +\ln(\Delta_0) ) dt\\& -  (1-g_A) \cdot u(1)  \cdot \int_{0}^{\infty} \exp(-t(1+L_{\UN}) +\ln(\Delta_0) )  dt\\
& = (1-g_A)\cdot (u(1) +|u(0)|) \frac{\Delta_0}{1-L_{\AA2}}\\& -  (1-g_A)\cdot u(1)  \frac{\Delta_0}{1-L_{\UN}}
\end{align*}

We ask under what conditions of $L_{\AA2}$ and $L_{\UN}$ can the above difference be negative:
\begin{align*}
&(1-g_A)\cdot (u(1) +|u(0)|) \frac{\Delta_0}{1-L_{\AA2}} -  (1-g_A)\cdot u(1) \frac{\Delta_0}{1-L_{\UN}} \leq 0  \\
&  \Leftrightarrow (1-g_A)\cdot (u(1) +|u(0)|) \frac{\Delta_0}{1-L_{\AA2}} \leq (1-g_A)\cdot u(1)  \frac{\Delta_0}{1-L_{\UN}}  \\
&  \Leftrightarrow\frac{1-L_{\UN}}{1-L_{\AA2}} \leq \frac{(1-g_A)\cdot u(1)}{(1-g_A)\cdot u(1) +|u(0)|}\\
& \Leftrightarrow L_{\AA2} \leq L_{\UN} \frac{(1-g_A)\cdot u(1) +|u(0)|} {(1-g_A)\cdot u(1)} + 1-\frac{(1-g_A)\cdot u(1) +|u(0)|}{(1-g_A) \cdot u(1)}  \\
& \Leftrightarrow L_{\AA2} \leq 1 + ( L_{\UN} -1) \cdot \frac{(1-g_A)\cdot u(1) +|u(0)|} {(1-g_A) \cdot u(1)} 
\end{align*}
Since we must have $L_{\AA2}\geq 0$, then we require $L_{\UN}\leq 1 - \frac {(1-g_A)\cdot u(1)} {(1-g_A)\cdot u(1) +|u(0)|}$ as a necessary additional condition.
\end{proof}
\subsection{Section \ref{sec:unequalityaa}}
\subsubsection{Theorem \ref{th:AA1equalitywith-keq}}
\noindent \paragraph{Theorem \ref{th:AA1equalitywith-keq}}
\textit{
Assume status quo bias, i.e. Assumption \ref{status-quo-bias} and a k-equilibrium dynamics under UN, i.e. Assumption \ref{as:keqUN}. Let $j$ be the initially advantaged group. If the disadvantaged group starts at $\pi_0(1|\neg j)\neq \delta_{i}$ for any $i\in \{1,\cdots,k-1\}$, following \AA1 reaches equality in both DT and CT.
}
\begin{proof}
Consider first DT dynamics. By lemma \ref{lem:AA1condition}, it suffices to show that $|f_1(0,\pi)-f_0(0,\pi)|\neq 1$, not for all $\pi$, but only for $\pi \in[0,1]$ \emph{reachable by the disadvantaged group $\neg j$}. In \AA1, recall that the dynamics of group $\neg j$ are the same as in the unconstrained case. Since we assume that initially $\pi_0(1|\neg j)\neq \delta_{i}$ for any $i\in \{1,\cdots,k-1\}$, it follows that the same holds for all time $t$ and none of these $\delta_{i}$ are reachable, because they are unstable equilibria.

Assume then, for the sake of contradiction, that there exists a $\pi^*\in[0,1]$ such that $|f_1(0,\pi^*)-f_0(0,\pi^*)|=1$, from the status-quo bias Assumption \ref{status-quo-bias}, the condition reduces to $f_1(0,\pi^*)-f_0(0,\pi^*)=1$.

This implies that $f_1(0,\pi^*)=1$ and $f_0(0,\pi^*)=0$, thus $f(\pi^*)=\pi^*$, or equivalently $\pi^*$ is an equilibrium of the dynamics $f$. Additionally, by the same arguments as in the proof of Theorem \ref{th:UNimpliesAA1}, we can deduce that if $\pi^*\in\{0,1\}$ then $f'(\pi^*)\geq 1$ and if $\pi^*\in(0,1)$ then $f'(\pi^*)=1$.

It is not possible for $\pi^*$ to be a stable equilibrium, i.e. for all  $i\in [k]$, $\pi^*\neq \pie_{i}$. Otherwise, condition \eqref{cond:k-eqDT2}, which requires that for all $i\in[k]$, $f'(\pie_{i})\leq L_i <1$, is contradicted. The only remaining possibility is for $\pi^*$ to be an unstable equilibrium. But since these are not reachable, we conclude by contradiction that we cannot have $|f_1(0,\pi)-f_0(0,\pi)|=1$.

The CT case follows from Lemma \ref{lema:CTAAequality}.
\end{proof}
\subsubsection{Theorem \ref{th:keqAA1-2}}
\noindent \paragraph{Theorem \ref{th:keqAA1-2}}
\textit{
Under the conditions of Theorem \ref{th:AA1equalitywith-keq}, if the policy is \AA1, then the equalized population generates long-term utility no higher (and possibly lower) than the limiting population under \UN. If the policy is \AA2 and it leads to social equality, then the equalized population generates long-term utility no lower (and possibly higher) than the limiting population under \UN.
}
\begin{proof}
Without loss of generality, let $A$ be the advantaged group. Let $i,j \in [k]$ with $i\leq j$, let $\pi_0(1|B) \in (\delta_{i-1},\delta_{i})$ and  $\pi_0(1|A) \in (\delta_{j-1},\delta_{j})$.
	
Consider first the \AA1 case. Then the policy for group $B$ is identical to the \UN policy as long as $\pi_t(1|B)\leq\pi_t(1|A)$ and from Lemma \ref{prop:ct_order} this is assured. Therefore as $t\to \infty$,  $\pi_t(1|B) \to \pie_{i}$. Theorem \ref{th:AA1equalitywith-keq} implies that $\pi_t(1|A) \to \pi_t(1|B)$ and consequently $\pi_t(1|A) \to \pie_{i}$. The equilibrium state is thus $\lbrace \pie_{i},\pie_{i} \rbrace$. 
	
Under the \UN policy, one ends up with distributions $\lbrace \pie_{i},\pie_{j} \rbrace$ for groups B and A respectively. While this state is possibly unequal however it possesses a higher utility value per time step. Therefore, by reducing selection rates the population is forced to an overall less qualified state.

Consider next the \AA2 case. Then the policy for group $A$ is identical to the \UN policy. Therefore as $t\to \infty$ we have $\pi_t(1|A) \to \pie_{j}$. If the condition for equality under \AA2 is satisfied then $\pi_t(1|B) \to \pi_t(1|A) \to \pie_{j}$. Thus the population converges to the equilibrium state $\lbrace \pie_{j},\pie_{j} \rbrace$. Compared again to the \UN policy, this is is possibly of higher utility value as $i\leq j$. Therefore, by increasing selection rates the population is pushed to be more qualified, but only if \AA2 indeed reaches equality.
\end{proof}

\section{Figure \ref{fig:keqdynamic}} \label{apx:figure-details}
The 3-equilibrium dynamics in figure \ref{fig:keqdynamic} are generated with the following $f_1$ and $f_0$ functions:
\begin{eqnarray*}
f_1(\beta(0),\beta(1))
	&=&	0.5(\beta(1)+\frac{\beta(1)}{5})/1.4 
    	+~\mathrm{e}^{-10^{-9}\cdot\left(\beta(0)+\beta(1)\right)} \cdot \sin\left(18(\beta(0)+\beta(1))\right)+0.1 \\
f_0(\beta(0),\beta(1)) &=& (\beta(1)+\frac{\beta(1)}{5})/1.2 +0.01
\end{eqnarray*}
The resulting function $f$ is shown in figure \ref{fig:keqdynamicfunction}.
\begin{figure}[h]
\centering
  \includegraphics[trim={3.5cm 8cm 3.5cm 8cm},clip,scale=0.7]{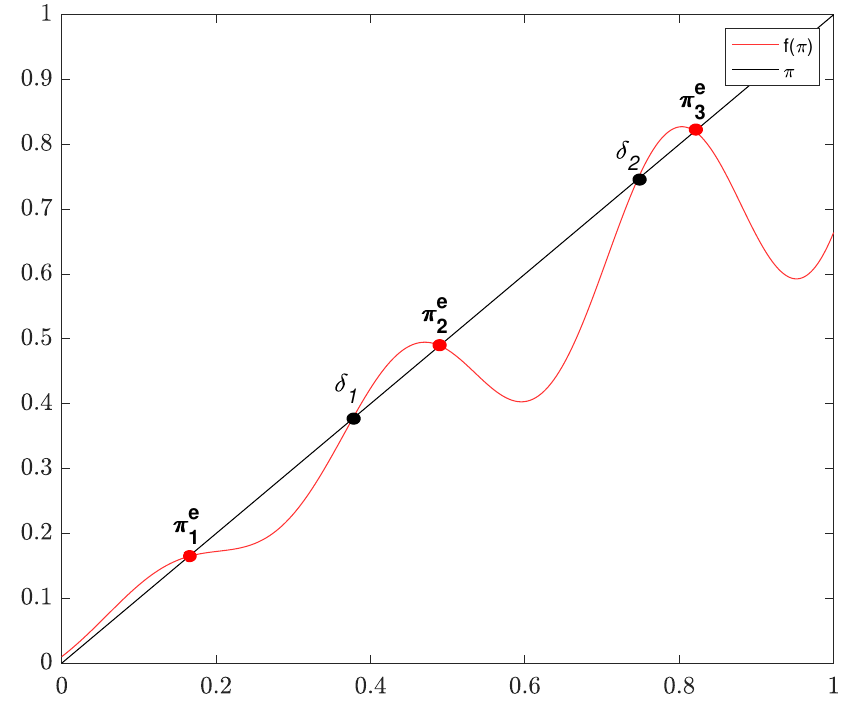}
    \caption{3-equilibrium dynamic function}
\label{fig:keqdynamicfunction}
\end{figure}

\end{document}